\tikzstyle{node}=[fill=white, draw=black, shape=rectangle,minimum width=0.6cm, minimum height=0.6cm, node distance=0.6cm]
\tikzstyle{redcirc}=[fill=white, draw=red, shape=circle]
\tikzstyle{bluebox}=[fill=white, draw=blue, shape=rectangle]
\tikzstyle{greenbox}=[fill=white, draw=black!50!green, shape=rectangle]
\newtheorem{theorem}{Theorem}
\newtheorem{lemma}[theorem]{Lemma}
\newtheorem{define}[theorem]{Definition} 
\newtheorem{example}{Example}
\newcommand{\srzero}{\ensuremath{e_{\oplus}}}
\newcommand{\srone}{\ensuremath{e_{\otimes}}}
\newcommand{\srsplus}{\ensuremath{{\oplus}}}
\newcommand{\srstimes}{\ensuremath{{\otimes}}}
\newcommand{\srbplus}{\ensuremath{{\textstyle\bigoplus}}}
\newcommand{\srbtimes}{\ensuremath{{\textstyle\bigotimes}}}
\newcommand{\NP}{\textsc{NP}\xspace}
\newcommand{\PRIM}{\ensuremath{\textsc{PRIM}}}
\newcommand{\smp}{\textsc{sm}ProbLog\xspace}
\newcommand{\dtp}{DTProbLog\xspace}
\newcommand{\SLASH}{SLASH\xspace}
\newcommand{\scp}{SC-ProbLog\xspace}
\newcommand{\twoAMC}{\textsc{2AMC}\xspace}
\newcommand{\aspmc}{{\small\textsf{aspmc}}\xspace}
\newcommand{\problog}{ProbLog\xspace}
\newcommand{\PITA}{PITA\xspace}
\newcommand{\transform}{t}
\newcommand{\inter}{\ensuremath{int}}
\newcommand{\lit}{\ensuremath{lit}}
\newcommand{\mods}{\ensuremath{mod}}
\newcommand{\lpnot}{\ensuremath{not\,}}
\newcommand{\dd}{\!::\!}
\begin{document}

\lefttitle{R. Kiesel \and P. Totis \and A. Kimmig}

\jnlPage{X}{X}
\jnlDoiYr{2022}
\doival{10.1017/xxxxx}

\title[Theory and Practice of Logic Programming]{Efficient Knowledge Compilation Beyond Weighted Model Counting\footnote{Full proofs for all technical results are in the
  technical~\ref{sec:appendix}.}}
\begin{authgrp}
\author{\sn{Rafael} \gn{Kiesel\textsuperscript{*},} \sn{Pietro} \gn{Totis\textsuperscript{\dag},} \sn{Angelika} \gn{Kimmig\textsuperscript{\dag}}}
\affiliation{\phantom{ }\textsuperscript{*}TU Vienna; \phantom{ }\textsuperscript{\dag}KU Leuven}
\end{authgrp}

\history{\sub{xx xx xxxx;} \rev{xx xx xxxx;} \acc{xx xx xxxx}}

\maketitle
\begin{abstract}
Quantitative extensions of logic programming  often require the solution of so called \emph{second level} inference tasks, i.e., problems that involve a third operation, such as maximization or normalization, on top of addition and multiplication, and thus go beyond the well-known weighted or algebraic model counting setting of probabilistic logic programming under the distribution semantics. We introduce Second Level Algebraic Model Counting (2AMC) as a generic framework for this kind of problems. As 2AMC is to (algebraic) model counting what forall-exists-SAT is to propositional satisfiability, it is notoriously hard to solve. First level techniques based on Knowledge Compilation (KC) have been adapted for specific 2AMC instances by imposing variable order constraints on the resulting circuit. However, those constraints can severely increase the circuit size and thus decrease the efficiency of such approaches. We show that we can exploit the logical structure of a 2AMC problem to omit parts of these constraints, thus limiting the negative effect. Furthermore, we introduce and implement a strategy to generate a sufficient set of constraints statically, with a priori guarantees for the performance of KC. Our empirical evaluation on several benchmarks and tasks confirms that our theoretical results can translate into more efficient solving in practice. Under consideration for acceptance in TPLP.
\end{abstract}

\section{Introduction}
Especially in recent years,  research on quantitative extensions in Logic Programming has flourished. \problog~\citep{de2007problog}, \PITA~\citep{riguzzi2011pita}, \smp~\citep{totis2021smproblog} and others~\citep{baral2009probabilistic, lee2017lpmln} allow for probabilistic reasoning, DeepProbLog~\citep{manhaeve2018deepproblog}, NeurASP~\citep{yang2020neurasp} and SLASH~\citep{skryagin2021slash} additionally integrate neural networks into programs. 

Algebraic Prolog~\citep{kimmig2011algebraic}, algebraic model counting (AMC)~\citep{kimmig2017algebraic} and algebraic answer set counting~\citep{eiter2021treewidth} define  general frameworks based on semirings to express and solve  quantitative  \emph{first level} problems, which compute \emph{one} aggregate over all models, e.g., counting the number of models, or summing or maximizing values associated with them, such as probabilities or utilities. 
\cite{kimmig2017algebraic} showed that we can solve first level problems by compiling the logic program   into a \emph{tractable circuit representation}, on which evaluating an AMC task is in polynomial time if the semiring operations have constant cost.

However, many interesting tasks require two kinds of aggregation, and thus are \emph{second level} problems that go beyond AMC. Examples include Maximum A Posteriori (MAP) inference in probabilistic programs, which involves maximizing over some variables while summing over others, 
inference in  \SLASH and \smp, and optimization tasks in decision-theoretic or constrained probabilistic programming languages such as \dtp~\citep{dtproblog,derkinderen2020algebraic} and \scp~\citep{latour2017combining}. 

While second level problems stay hard on general tractable circuit representations, 
\dtp and \scp are known to be polynomial time on \emph{$\mathbf{X}$-constrained} SDDs. 
The key idea is to ensure that all variables of the \emph{outer} aggregation appear before those of the \emph{inner} one in the circuit so that we can perform both aggregations sequentially. 
This, however, comes at a high cost: circuits respecting this constraint may be exponentially larger than non-constrained ones, resulting in significantly slower inference. Additionally, certain optimization techniques used in knowledge compilation, such as unit propagation, may cause constraint violations. 

In this paper, we generalize the AMC approach to second level problems, and show that we can often weaken the order constraint using \emph{definability}: 
A variable $Y$ is defined by a set of variables $\mathbf{X}$ and a propositional theory $\mathcal{T}$ if the value of $Y$ is functionally determined by the assignment to $\mathbf{X}$ in every satisfying assignment to $\mathcal{T}$. E.g., $a$ is defined by $b$ in the theory $\{a \leftrightarrow b\}$. Informally, if a variable participating in the inner aggregation becomes defined by the variables of the outer aggregation at any point during compilation, we can
move that variable to the outer aggregation. This can allow for exponentially smaller circuits and consequently faster evaluation, and additionally justifies the use of unit propagation.

Our main contributions are as follows:
\begin{itemize}
\item We introduce second level algebraic model counting (\twoAMC), a semiring-based unifying framework for second level quantitative  problems, and show that MAP, \dtp and \smp inference are \twoAMC tasks.
    \item We weaken \emph{$\mathbf{X}$-firstness}, the constraint that the variables in $\mathbf{X}$ need to occur first, to $\mathbf{X}$-firstness \emph{modulo definability} and show that this is sufficient for solving \twoAMC tasks under weak additional restrictions.
    \item We lift methods for generating good variable orders statically from tree decompositions to the constrained setting. 
    \item We implement our contributions in the algebraic answer set counter \aspmc~\citep{eiter2021treewidth} and the probabilistic reasoning engine ProbLog2~\citep{problog2system}.
 \item We evaluate our contributions on a range of benchmarks, demonstrating that drastically smaller circuits can be possible, and that our general tools are competitive with state of the art implementations of specific second level tasks in logic programming.
\end{itemize}
\section{Preliminaries}\label{sec:preliminaries}
We consider propositional theories $\mathcal{T}$ over variables $\mathbf{X}$. For a set of variables $\mathbf{Y}$, we denote by $\lit(\mathbf{Y})$ the set of literals over $\mathbf{Y}$, by $\inter(\mathbf{Y})$ the set of \emph{assignments} $\mathbf{y}$ to $\mathbf{Y}$ and by $\mods(\mathcal{T})$ those assignments that satisfy $\mathcal{T}$. Here, an assignment is a subset of $\lit(\mathbf{Y})$, which contains exactly one of $a$ and $\neg a$ for each variable $a \in \mathbf{Y}$. Given a partial assignment $\mathbf{y} \in \inter(\mathbf{Y})$ for a theory $\mathcal{T}$ over $\mathbf{X}$, we denote by $\mathcal{T}\mid_{\mathbf{y}}$ the theory over $\mathbf{X}\setminus \mathbf{Y}$ obtained by conditioning $\mathcal{T}$ on $\mathbf{y}$. We use $\models$ for the usual entailment relation of propositional logic.

Algebraic Model Counting (AMC) is a general framework for quantitative reasoning over models that generalizes weighted model counting 
to the semiring setting~\citep{kimmig2017algebraic}. 
\begin{define}[Monoid, Semiring]
A \emph{monoid} $\mathcal{M} = (M, \srstimes, \srone)$ consists of an associative binary operation $\srstimes$ with neutral element $\srone$ that operates on elements of $M$.
A commutative \emph{semiring} $\mathcal{S} = (S, \srsplus, \srstimes, \srzero, \srone)$ consists of two commutative monoids $(S, \srsplus, \srzero)$ and $(S, \srstimes, \srone)$ such that $\srstimes$ right and left distributes over $\srsplus$ and $\srzero$ annihilates $S$, i.e. $\forall s \in S: s\srstimes \srzero = \srzero = \srzero\srstimes s$. 
\end{define}
Some examples of well-known commutative semirings are 
\begin{itemize}
\label{ex:semirings}
    \item $\mathcal{P} = ([0,1], +, \cdot, 0, 1)$, the probability semiring,
    \item $\mathcal{S}_{\max, \cdot} = (\mathbb{R}_{\geq 0}, \max, \cdot, 0, 1)$, the max-times semiring,
    \item $\mathcal{S}_{\max, +} = (\mathbb{R}\cup\{-\infty\}, \max, +, -\infty, 0)$, the max-plus semiring,
    \item \textsc{EU} $ = (\{(p, eu) \mid p \in [0,1], eu \in \mathbb{R}\}, +, \otimes, (0,0), (1,0))$, the expected utility semiring, where addition is coordinate-wise and
$
    (a_1, b_1) \otimes (a_2, b_2) = (a_1 \cdot a_2, a_2\cdot b_1 + a_1 \cdot b_2).
$
\end{itemize}
An AMC instance $A = (\mathcal{T}, \mathcal{S}, \alpha)$ consists of a theory $\mathcal{T}$ over variables $\mathbf{X}$, a commutative semiring $\mathcal{S}$ and a labeling function $\alpha: \lit(\mathbf{X}) \rightarrow S$. The value of $A$ is \smallskip \\
\smallskip
\centerline{$\mathit{AMC}(A)= \srbplus_{M\in \mods(\mathcal{T})}\srbtimes_{l\in M}\alpha(l)$}
A prominent example of AMC is inference in probabilistic logic programming, which uses the probability semiring $\mathcal{P}$ and a theory for a probabilistic logic program (PLP). A PLP $\mathcal{L}=\mathbf{F}\cup R$ is a set $\mathbf{F}$ of probabilistic facts $p\dd f$ and a set $R$ of rules $h \leftarrow b_1,\dots,b_{n},  \lpnot c_1, \dots, \lpnot c_{m}$ such that each assignment (world) $\mathbf{f}$ to $\mathbf{F}$ leads to exactly one model.
By abuse of notation, we use $\mathcal{L}$ also for a propositional theory with the same models. We denote the Herbrand base, i.e., the set of all ground atoms in $\mathcal{L}$, by $\mathbf{H}$.
The success probability of a query $q$ is 
$\mathit{SUCC(q)} = \sum_{\mathbf{h}\in\mods(\mathcal{L}\mid_{q})} P(\mathbf{h})$, the sum of the probabilities $P(\mathbf{h}) = \prod_{l \in \mathbf{h}} \alpha(l)$ of the models $\mathbf{h}$ where $q$ \emph{succeeds}. Here, $\alpha(f) = p, \alpha(\lnot f) = 1-p$ for $p\dd f \in \mathbf{F}$ and $\alpha(l) = 1$, otherwise.

\begin{example}[Running Example]\label{ex:running}
We use the following probabilistic logic program $\mathcal{L}_{ex}$ throughout the paper.
\centerline{$\hfill 0.4\dd a \hfill c \leftarrow a \hfill 0.6\dd b \hfill d \leftarrow b \hfill$}
Its four worlds are $\mathbf{f}_1=\{a,b\}, \mathbf{f}_2=\{a,\lnot b\}, \mathbf{f}_3=\{\lnot a, b\}, \mathbf{f}_4 = \{\lnot a, \lnot b\}$, and the probabilities of their models $\mathbf{h}_1, \mathbf{h}_2, \mathbf{h}_3, \mathbf{h}_4$ are $P(\mathbf{h}_1) = 0.24, P(\mathbf{h}_2) = 0.16, P(\mathbf{h}_3) = 0.36$ and $P(\mathbf{h}_4) = 0.24$. The query $c$ succeeds in the models $\mathbf{h}_1$ and $\mathbf{h}_2$, thus $SUCC(c) = P(\mathbf{h}_1)+P(\mathbf{h}_2)=0.4$.
\end{example}

\cite{kimmig2017algebraic} showed that Knowledge Compilation (KC) to sd-DNNFs solves any AMC problem. sd-DNNFs are special negation normal forms (NNFs). An NNF~\citep{Darwiche04} is a rooted directed acyclic graph in which each leaf node is labeled with a literal, true or false, and each internal node is labeled with a conjunction $\wedge$ or disjunction $\vee$. 
For any node $n$ in an NNF graph, $\mathit{Vars}(n)$ denotes all variables in the subgraph rooted at $n$. By abuse of notation, we use $n$ also to refer to the formula represented by the graph $n$. sd-DNNFs are NNFs that satisfy the following additional properties:
\begin{description}
    \item[Decomposability (D):]  $\mathit{Vars}(n_i) \cap \mathit{Vars}(n_j) = \emptyset$ for any
two children $n_i$ and $n_j$ of an and-node.
    \item[Determinism (d):]  $n_i \wedge n_j$ is logically inconsistent for any two children $n_i$ and $n_j$ of an or-node.
    \item[Smoothness (s):]  $\mathit{Vars}(n_i) = \mathit{Vars}(n_j)$ for any two children $n_i$ and $n_j$ of an or-node.
\end{description}
In order to solve second level problems, we need Constrained KC (CKC), i.e., an additional property on NNFs, apart from \textbf{s}, \textbf{d} and \textbf{D}, that restricts the order in which variables occur. 
\begin{define}[$\mathbf{X}$-Firstness]
Given an NNF $n$ on variables partitioned into $\mathbf{X}, \mathbf{Y}$, we say an internal node $n_i$ of $n$ is \emph{pure} if $\mathit{Vars}(n_i) \subseteq \mathbf{X}$ or $\mathit{Vars}(n_i) \subseteq \mathbf{Y}$ and \emph{mixed} otherwise. 
$n$ is an \emph{$\mathbf{X}$-first} NNF, if for each of its and-nodes $n_i$ either all children of $n_i$ are pure nodes, or one child is mixed and all other children $n_j$ of $n_i$ are pure with $\mathit{Vars}(n_j) \subseteq \mathbf{X}$. 
\end{define}
Note that $\mathbf{X}$-first NNFs contain a node $n_{\mathbf{x}}$ equivalent to $n\mid_{\mathbf{x}}$ for each $\mathbf{x} \in \inter(\mathbf{X})$.

\begin{example}[cont.]


The NNFs in Figure~\ref{fig:sddnnfs} are both sd-DNNFs and equivalent to $\mathcal{L}_{ex}$. 
The left sd-DNNF is furthermore an $\{a,b\}$-first sd-DNNF, whereas the right is not.

\newcommand{\fac}{0.6}
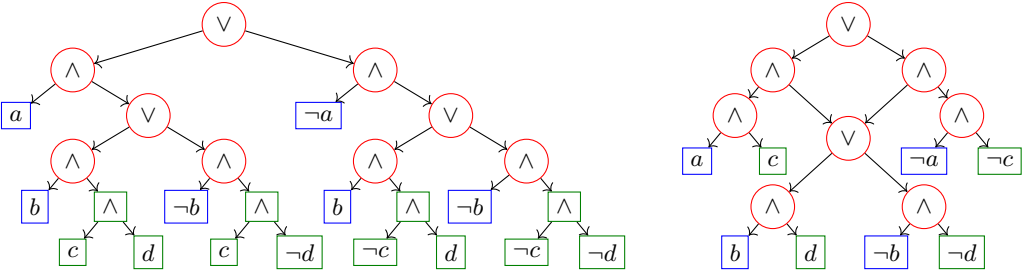
\begin{figure}
    \centering
    \begin{tikzpicture}
        \node[style=redcirc] (O10) at (-0.25, 0) {$\vee$};
        \node[style=redcirc] (A10) at (-2.25, -1*\fac) {$\wedge$};
        \node[style=bluebox] (L10) at (-3, -2*\fac) {$a$};
        \node[style=redcirc] (O20) at (-1.25, -2*\fac) {$\vee$};
        \node[style=redcirc] (A20) at (-2.25, -3*\fac) {$\wedge$};
        \node[style=bluebox] (L20) at (-2.75, -4*\fac) {$b$};
        \node[style=greenbox] (A30) at (-1.75, -4*\fac) {$\wedge$};
        \node[style=greenbox] (L30) at (-2.25, -5*\fac) {$c$};
        \node[style=greenbox] (L31) at (-1.25, -5*\fac) {$d$};
        
        \node[style=redcirc] (A21) at (-0.25, -3*\fac) {$\wedge$};
        \node[style=bluebox] (L21) at (-0.75, -4*\fac) {$\neg b$};
        \node[style=greenbox] (A31) at (0.25, -4*\fac) {$\wedge$};
        \node[style=greenbox] (L32) at (-0.25, -5*\fac) {$c$};
        \node[style=greenbox] (L33) at (0.75, -5*\fac) {$\neg d$};
        
        \node[style=redcirc] (A11) at (1.75, -1*\fac) {$\wedge$};
        \node[style=bluebox] (L11) at (1, -2*\fac) {$\neg a$};
        \node[style=redcirc] (O21) at (2.75, -2*\fac) {$\vee$};
        \node[style=redcirc] (A22) at (1.75, -3*\fac) {$\wedge$};
        \node[style=bluebox] (L22) at (1.25, -4*\fac) {$b$};
        \node[style=greenbox] (A32) at (2.25, -4*\fac) {$\wedge$};
        \node[style=greenbox] (L34) at (1.75, -5*\fac) {$\neg c$};
        \node[style=greenbox] (L35) at (2.75, -5*\fac) {$d$};
        
        \node[style=redcirc] (A23) at (3.75, -3*\fac) {$\wedge$};
        \node[style=bluebox] (L23) at (3, -4*\fac) {$\neg b$};
        \node[style=greenbox] (A33) at (4.25, -4*\fac) {$\wedge$};
        \node[style=greenbox] (L36) at (3.75, -5*\fac) {$\neg c$};
        \node[style=greenbox] (L37) at (4.75, -5*\fac) {$\neg d$};
        
        \draw[->] (O10) to (A10);
        \draw[->] (A10) to (O20);
        \draw[->] (A10) to (L10);
        \draw[->] (O20) to (A20);
        \draw[->] (A20) to (A30);
        \draw[->] (A20) to (L20);
        \draw[->] (A30) to (L30);
        \draw[->] (A30) to (L31);
        \draw[->] (O20) to (A21);
        \draw[->] (A21) to (L21);
        \draw[->] (A21) to (A31);
        \draw[->] (A31) to (L32);
        \draw[->] (A31) to (L33);
        
        \draw[->] (O10) to (A11);
        \draw[->] (A11) to (O21);
        \draw[->] (A11) to (L11);
        \draw[->] (O21) to (A22);
        \draw[->] (A22) to (A32);
        \draw[->] (A22) to (L22);
        \draw[->] (A32) to (L34);
        \draw[->] (A32) to (L35);
        \draw[->] (O21) to (A23);
        \draw[->] (A23) to (A33);
        \draw[->] (A23) to (L23);
        \draw[->] (A33) to (L36);
        \draw[->] (A33) to (L37);
    \end{tikzpicture}
    \hfill
    \begin{tikzpicture}
        \node[style=redcirc] (O10) at (0, 0) {$\vee$};
        \node[style=redcirc] (A10) at (-1, -1*\fac) {$\wedge$};
        \node[style=redcirc] (A20) at (-1.5, -2*\fac) {$\wedge$};
        \node[style=bluebox] (L10) at (-2, -3*\fac) {$a$};
        \node[style=greenbox] (L11) at (-1, -3*\fac) {$c$};
        \node[style=redcirc] (A11) at (1, -1*\fac) {$\wedge$};
        \node[style=redcirc] (A21) at (1.5, -2*\fac) {$\wedge$};
        \node[style=bluebox] (L12) at (1, -3*\fac) {$\neg a$};
        \node[style=greenbox] (L13) at (2, -3*\fac) {$\neg c$};
        
        \node[style=redcirc] (O20) at (0, -2.5*\fac) {$\vee$};
        \node[style=redcirc] (A30) at (-1, -4*\fac) {$\wedge$};
        \node[style=bluebox] (L30) at (-1.5, -5*\fac) {$b$};
        \node[style=greenbox] (L31) at (-0.5, -5*\fac) {$d$};
        \node[style=redcirc] (A31) at (1, -4*\fac) {$\wedge$};
        \node[style=bluebox] (L32) at (0.5, -5*\fac) {$\neg b$};
        \node[style=greenbox] (L33) at (1.5, -5*\fac) {$\neg d$};

        \draw[->] (O10) to (A10);
        \draw[->] (A10) to (O20);
        \draw[->] (A10) to (A20);
        \draw[->] (A20) to (L10);
        \draw[->] (A20) to (L11);
        \draw[->] (O20) to (A30);
        \draw[->] (A30) to (L30);
        \draw[->] (A30) to (L31);
        \draw[->] (O20) to (A31);
        \draw[->] (A31) to (L32);
        \draw[->] (A31) to (L33);
        
        \draw[->] (O10) to (A11);
        \draw[->] (A11) to (A21);
        \draw[->] (A21) to (L12);
        \draw[->] (A21) to (L13);
        \draw[->] (A11) to (O20);
    \end{tikzpicture}
    \caption{Two sd-DNNFs for $\mathcal{L}_{ex}$. Mixed nodes for partition $\{a,b\},\{c,d\}$ are circled red and pure nodes are boxed blue or boxed green, when they are from $\{a,b\}$ or $\{c,d\}$, respectively.}
    \label{fig:sddnnfs}
\end{figure}
\end{example}
\section{Second Level Algebraic Model Counting}
We now introduce Second Level Algebraic Model Counting (\twoAMC), a generalization of AMC that provides a unified view on second level problems.
\begin{define}[Second Level Algebraic Model Counting (\twoAMC)]
A \twoAMC instance is a tuple $A = (\mathcal{T}, \mathbf{X}_I, \mathbf{X}_O, \alpha_I, \alpha_O, \mathcal{S}_I, \mathcal{S}_O, \transform)$, where
\begin{itemize}
    \item $\mathcal{T}$ is a propositional theory,
    \item $(\mathbf{X}_I, \mathbf{X}_O)$ is a partition of the variables in $\mathcal{T}$, 
    \item $\mathcal{S}_j = (S_j, \srsplus^{j}, \srstimes^{j}, e_{\srsplus^{j}}, e_{\srstimes^{j}})$ for $j \in\{ I,O\}$ is a commutative semiring,
    \item $\alpha_j : \lit(\mathbf{X}_j) \rightarrow S_j$ 
    for $j \in\{ I, O\}$ is a  labeling function for literals, and 
    \item $\transform : S_I \rightarrow S_O$ is a weight transformation function that respects 
    $\transform(e_{\srsplus^{I}}) = e_{\srsplus^{O}}$.
\end{itemize}
The \emph{value of $A$}, denoted by $\twoAMC(A)$, is defined as 
\begin{align*}
    \twoAMC(A) = \srbplus^{O}_{\mathbf{x}_O \in \inter(\mathbf{X}_O)} \srbtimes^{O}_{x \in \mathbf{x}_O} \alpha_O(x) \srstimes^{O} \transform\left(\srbplus^{I}_{\mathbf{x}_I \in \mods(\mathcal{T}\mid_{\mathbf{x}_O})} \srbtimes^{I}_{y \in \mathbf{x}_I} \alpha_I(y)\right).
\end{align*}
\end{define}
An AMC instance is a \twoAMC instance, where $\mathbf{X}_O = \emptyset$ and $\transform$ is the identity function, meaning we only sum up weights over $\mathcal{S}_I$. Intuitively, the idea behind \twoAMC is that we solve an \emph{inner} AMC instance over the variables in $\mathbf{X}_I$ for each assignment to $\mathbf{X}_O$. Then we apply the transformation function to the result, thus replacing the inner summation by a corresponding element from the outer semiring, and solve a second \emph{outer} AMC instance over the variables in $\mathbf{X}_O$. 
\begin{example}[cont.]
\label{ex:twoamc}
Consider the question whether it is more likely for $c$ to be true or false in $\mathcal{L}_{ex}$ from Example~\ref{ex:running}, i.e., we want to find $\arg\max_{\mathbf{c}\in\inter(c)}\mathit{SUCC(\mathbf{c})}$. To keep notation simple, we consider $\max$ rather than $\arg\max$ (see the discussion below Def.~\ref{def:map} for full details). Denoting the label of literal $l$ by $\alpha(l)$ as in the definition of $\mathit{SUCC}$, the task corresponds to
\[
\max_{\mathbf{c}\in\inter(c)} \alpha(\mathbf{c})\sum_{\mathbf{h}\in\mods(\mathcal{L}_{ex}\mid_{\mathbf{c}})} \prod_{l \in \mathbf{h}}\alpha(l).
\]
We thus have a \twoAMC task with outer variables $\{c\}$, inner variables $\{a,b,d\}$ and the probability semiring and max-times semiring as inner and outer semiring, respectively, and both kinds of labels given by $\alpha$.  
The formal definition of this \twoAMC instance is $A_{ex} = (\mathcal{L}_{ex}, \{a,b,d\},\{c\}, \alpha_I, \alpha_O, \mathcal{P}, \mathcal{S}_{\max,\cdot}, id)$, where $\alpha_I(l)$ is the probability of $l$ if $l \in \lit(\{a,b\})$ and $1$ if $l \in \lit(\{d\})$, and $\alpha_O(l) = 1$, $l \in \lit(\{c\})$. 
We can further evaluate the value as follows:
\begin{align*}
    \twoAMC(A_{ex})&= \max_{\mathbf{c}\in\inter(c)} 1 \cdot \sum_{\substack{\mathbf{a}\in\inter(\{a\}),\mathbf{b}\in\inter(b), \mathbf{d}\in\inter(d)\\ \mathbf{a}=\mathbf{c}, \mathbf{b}=\mathbf{d}}} 1 \cdot \alpha_I(\mathbf{a})\cdot \alpha_I(\mathbf{b})\\
   & =  \textstyle\max\{\alpha_{I}(a)\alpha_{I}(b) + \alpha_{I}(a)\alpha_{I}(\lnot b), \alpha_{I}(\lnot a)\alpha_{I}(b) + \alpha_{I}(\lnot a)\alpha_{I}(\lnot b)\} \\   
   & =  \textstyle\max\{0.4\cdot 0.6 + 0.4\cdot 0.4,0.6\cdot 0.6 + 0.6\cdot 0.4\} 
   =  \textstyle\max\{0.4, 0.6\} = 0.6
\end{align*}
i.e., the most likely value is $0.6$ and corresponds to $\neg c$. 
\end{example}
Before we further illustrate this formalism with tasks from the literature, we prove that \twoAMC can be solved in polynomial time on $\mathbf{X}_O$-first sd-DNNFs.
A similar result is already known for \dtp~\citep{derkinderen2020algebraic} and \scp~\citep{latour2017combining}. 
\begin{theorem}[Tractable \twoAMC with $\mathbf{X}_O$-first sd-DNNFs]
Let $A = (\mathcal{T}, \mathbf{X}_I, \mathbf{X}_O, \alpha_I, \alpha_O, \mathcal{S}_I, \mathcal{S}_O, \transform)$ be a \twoAMC instance, where $\mathcal{T}$ is an $\mathbf{X}_O$-first sd-DNNF. Then, we can compute $\twoAMC(A)$ in polynomial time in the size of $\mathcal{T}$ assuming constant time semiring operations.
\end{theorem}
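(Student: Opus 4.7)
The plan is to evaluate the sd-DNNF bottom-up in the style of standard AMC, but distinguishing two regimes and inserting a single application of $\transform$ at each transition from the inner to the outer one. Concretely, for every node $n$ I would maintain a value $v(n)$ that lives in $\mathcal{S}_I$ exactly when $n$ is pure $\mathbf{X}_I$ and in $\mathcal{S}_O$ otherwise. Leaves get $v(l)=\alpha_I(l)$ or $v(l)=\alpha_O(l)$ depending on the partition; pure internal nodes are evaluated as in AMC, using $\srsplus^{I}/\srstimes^{I}$ or $\srsplus^{O}/\srstimes^{O}$ as appropriate. For a mixed $\vee$-node, smoothness forces every child to be mixed as well, so $v(n)=\srbplus^{O}_i v(n_i)$. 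For a mixed $\wedge$-node, $\mathbf{X}_O$-firstness leaves only two subcases: either one child is mixed and the rest are pure $\mathbf{X}_O$, in which case I multiply their values together in $\mathcal{S}_O$; or all children are pure, in which case I first multiply the pure $\mathbf{X}_I$ children in $\mathcal{S}_I$, apply $\transform$ once to that product, and finally multiply the result with the pure $\mathbf{X}_O$ children in $\mathcal{S}_O$. The returned answer is $v(\text{root})$, with an extra final $\transform$ in the degenerate case where the root is itself pure $\mathbf{X}_I$.

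Correctness would follow from a structural induction showing that $v(n)$ equals the AMC of $n$ in $\mathcal{S}_I$ whenever $n$ is pure $\mathbf{X}_I$, and otherwise equals the partial 2AMC expression $\srbplus^{O}_{\mathbf{v}_O \in \inter(\mathit{Vars}(n)\cap\mathbf{X}_O)} \srbtimes^{O}_{l \in \mathbf{v}_O} \alpha_O(l) \srstimes^{O} \transform\left(\srbplus^{I}_{\mathbf{v}_I \in \mods(n\mid_{\mathbf{v}_O})} \srbtimes^{I}_{l \in \mathbf{v}_I} \alpha_I(l)\right)$. The $\vee$-cases reduce to the standard AMC argument via smoothness and determinism, while the $\wedge$-cases use decomposability to factor both $\mathit{Vars}(n)\cap\mathbf{X}_O$ and, child-wise, the inner model sets. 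The requirement $\transform(e_{\srsplus^{I}})=e_{\srsplus^{O}}$ from the definition of a \twoAMC instance is precisely what is needed in the ``all pure children'' case: if an outer assignment $\mathbf{v}_O$ falsifies one of the pure $\mathbf{X}_O$ conjuncts in a subtree, the inner conditional theory becomes unsatisfiable, and $\transform$ must map the resulting $e_{\srsplus^{I}}$ to the $\mathcal{S}_O$-annihilator so that the offending outer term vanishes.

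The main obstacle I expect is the mixed $\wedge$-node with only pure children. Because $\transform$ need not distribute over $\srstimes^{I}$, it is essential to combine \emph{all} pure $\mathbf{X}_I$ children in $\mathcal{S}_I$ \emph{before} applying $\transform$, and the verification that this single transformation yields the correct value relies on decomposability ensuring that the $\mathbf{X}_I$-variables of those children partition $\mathit{Vars}(n)\cap\mathbf{X}_I$, so that the $\mathcal{S}_I$-product really equals the inner AMC of their conjunction. Once the induction goes through, the algorithm visits each node of the DAG once and performs $O(1)$ semiring operations per visit, giving the claimed polynomial (in fact linear) time bound in $|\mathcal{T}|$.
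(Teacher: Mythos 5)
Your proposal is correct and follows essentially the same route as the paper: evaluate the circuit bottom-up as an algebraic circuit, using $\mathcal{S}_I$ on pure $\mathbf{X}_I$ subgraphs and $\mathcal{S}_O$ above, with a single application of $\transform$ at the boundary, the correctness resting on $\mathbf{X}_O$-firstness and smoothness guaranteeing that each outer assignment isolates a node computing the inner AMC value. Your explicit structural-induction invariant and your care to combine \emph{all} pure $\mathbf{X}_I$ children in $\mathcal{S}_I$ before transforming (since $\transform$ need not commute with $\srstimes^{I}$) are in fact somewhat more detailed than the paper's sketch, which phrases correctness via enumeration of certificate subgraphs instead.
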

\label{thm:tract1}
\begin{proof}[Proof (Sketch)]

Consider a subgraph $n$ of $\mathcal{T}$ with exactly one outgoing edge for each or-node and all outgoing edges for each and-node. As $\mathcal{T}$ is $\mathbf{X}_O$-first and smooth, there is a node $n'$ in $n$ such that $\mathit{Vars}(n') = X_I$, i.e., exactly the outer variables occur above $n'$ (see also the lowest and-nodes of the left NNF in Figure~\ref{fig:sddnnfs}).
Thus, $n'$ is equivalent to $\mathcal{T}\mid_{\mathbf{x}_O}$ for some assignment $\mathbf{x}_O$ to the outer variables, for which $n'$ computes the value of the inner AMC instance. 
As  evaluation sums over all these subgraphs, it obtains the correct result.
\end{proof}

We illustrate 2AMC with three tasks from quantitative logic programming. 

\paragraph{\textbf{Maximum a Posteriori Inference}}
A typical second level probabilistic inference task is \emph{maximum a posteriori} inference,
which involves maximizing over one set of variables while summing over another, as in Example~\ref{ex:twoamc}.
\begin{define}[The MAP task]\label{def:map}
\textbf{Given}
a probabilistic logic program $\mathcal{L}$, a conjunction $\mathbf{e}$ of observed  literals for the set of evidence atoms $\mathbf{E}$, and a set of ground query atoms $\mathbf{Q}$ \\
\textbf{Find} 
the most probable assignment $\mathbf{q}$ to $\mathbf{Q}$ given the evidence $\mathbf{e}$, with $\mathbf{R} = \mathbf{H}\setminus (\mathbf{Q}\cup\mathbf{E})$: 
\[\mathit{MAP}(\mathbf{Q}|\mathbf{e})= \arg\max_{\mathbf{q}} P(\mathbf{Q}=\mathbf{q}|\mathbf{e}) = \arg\max_{\mathbf{q}} \sum_{\mathbf{r}}P(\mathbf{Q}=\mathbf{q},\mathbf{e},\mathbf{R}=\mathbf{r} )
\]
\end{define}
\vspace{-12pt}
\subparagraph{\textit{MAP as 2AMC.}} Solving MAP requires (1) summing probabilities over the truth values of the atoms in $\mathbf{R}$ (with fixed truth values for atoms in $\mathbf{E}\cup\mathbf{Q}$), and (2) determining truth values of the atoms in $\mathbf{Q}$ that maximize this inner sum.  
Thus, we have $\mathbf{X}_O=\mathbf{Q}$ and $\mathbf{X}_I=\mathbf{E}\cup\mathbf{R}$. 

The inner problem corresponds to usual probabilistic inference, i.e., $\mathcal{S}_I=\mathcal{P}$ and $\alpha_I$ assigns $1$ to the literals in $\mathbf{e}$ and $0$ to their negations, $p$ and $1-p$ to the positive and negative literals for probabilistic facts $p\dd f$ that are not part of $\mathbf{E}$, and $1$ to both literals for all other variables in $\mathbf{X}_I$. 

We choose $\mathcal{S}_O = (\mathbb{R}_{\geq 0} \times 2^{\lit(\mathbf{Q})}, \srsplus, \srstimes, (0, \emptyset), (1, \emptyset))$ as the max-times semiring combined with the subsets of the query literals $\lit(\mathbf{Q})$ to remember the assignment that was used. 
Here, $(r_1, S_1) \srsplus (r_2, S_2)$ is $(r_1, S_1)$ if $r_1 > r_2$ and $(r_2, S_2)$ if $r_1 < r_2$. To ensure commutativity, if $r_1=r_2$, the sum is $(r_1, \min_{>}(S_1,S_2))$ where $>$ is some arbitrary but fixed total order on $2^{\lit(\mathbf{Q})}$.
The product $(r_1, S_1) \srstimes (r_2, S_2)$ is defined as $(r_1 \cdot r_2, S_1 \cup S_2)$.
The weight function is given by $\alpha_O(l)= (p, \{l\})$ if $l=f$ for probabilistic fact $p\dd f$, $\alpha_O(l) = (1-p, \{l\})$ if $l= \lnot f$ for probabilistic fact $p\dd f$, and $\alpha_O(l)=(1, \{l\})$ otherwise. 
The transformation function is the function $t(p)=(p, \emptyset)$. 

\paragraph{\textbf{Maximizing Expected Utility}}
Another second level probabilistic task is \emph{maximum expected utility}~\citep{dtproblog, derkinderen2020algebraic}, which introduces an additional set of variables $\mathbf{D}$ whose truth value can be arbitrarily chosen by a strategy $\sigma(\mathbf{D})=\mathbf{d}, \mathbf{d}\in int(\mathbf{D})$, and is neither governed by probability nor logical rules. A \emph{utility} function $u$ maps each literal $l$ to a reward $u(l)\in \mathbb{R}$ for  $l$ being true. 
\begin{define}[Maximum Expected Utility (MEU) Task]
\textbf{Given}
A program $\mathcal{L}=\mathbf{F}\cup R\cup \mathbf{D} $ with a utility function $u$\\
\textbf{Find} 
a strategy $\sigma^*$  that maximizes the expected utility:
\[\sigma^*= \arg \max_{\sigma} \sum_{M\in\mods(\mathbf{F}\cup R\cup\sigma(\mathbf{D}))} (\prod_{l\in M} \alpha(l)) (\sum_{l\in M} u(l)),\]
where the label $\alpha(l)$ of literal $l$ is as defined for $\mathit{SUCC}$.
\end{define}

\begin{example}[cont.]
Consider the program $\mathcal{L}_{EU}$ obtained from $\mathcal{L}_{ex}$ by replacing $0.4\dd a$ by a decision variable $?\dd a$,  with $u(c)=40$,  $u(\neg d)=20$ and $u(l)=0$ for all other literals. 
Setting $a$ to true, we have models $\{a,b,c,d\}$ with probability $0.6$ and utility $40$ and $\{a,c\}$ with  probability $0.4$ and utility $60$, and thus expected utility  $0.6\cdot 40 + 0.4\cdot 60 = 48$. Similarly, we have expected utility $0.6\cdot 0+0.4\cdot 20=8$ for setting $a$ to false. Thus, the MEU strategy sets $a$ true. 
\end{example}

\subparagraph{\textit{MEU as 2AMC.}} 
Solving MEU involves (1) summing expected utilities of models over the non-decision variables (with fixed truth values for $\mathbf{D}$), and (2) determining truth values of the atoms in $\mathbf{D}$ that maximize this inner sum. Thus, we have $\mathbf{X}_O=\mathbf{D}$ and $\mathbf{X}_I=\mathbf{H}\setminus\mathbf{D}$. 

The inner problem is solved by the expected utility semiring $\mathcal{S}_I=EU$, with $\alpha_I$ mapping literal $l$ to $(p_l, p_l\cdot u(l))$ if $l$ is a probabilistic literal with probability $p_l$, and to $(1, u(l))$  otherwise.

The basis for solving the outer problem is the max-plus semiring $\mathcal{S}_{\max, +}$,  
with $\alpha_O$ the utility function $u$, and the transformation function $\transform((p, pu)) = pu$, if $p\neq 0$ and $\transform((0, pu)) = -\infty$. This is extended to argmax using the same idea as for MAP.

\paragraph{\textbf{Probabilistic Inference with Stable Models}}
A more recent second level probabilistic task is probabilistic inference with stable model semantics~\citep{totis2021smproblog, skryagin2021slash}. Inference of success probabilities reduces to a variant of weighted model counting, where the weight of a (stable) model of a program $\mathcal{L}=R\cup\mathbf{F}$ is normalized with the number of models sharing the same assignment $\mathbf{f}$ to the probabilistic facts $\mathbf{F}$:

\[ SUCC^{sm}(q)
= \sum_{\mathbf{h} \in \mods(\mathcal{L} \mid_{\mathbf{f}}), \mathbf{f}\in\inter(\mathbf{F})} \frac{|\mods(\mathcal{L} \mid_{\mathbf{f}\cup\{q\}})|}{|\mods(\mathcal{L} \mid_{\mathbf{f}})|}\cdot\prod_{l\in \mathbf{h}}\alpha(l),\]
where the label $\alpha(l)$ of literal $l$ is as defined for $\mathit{SUCC}$.

\begin{example}[cont.]
For $\mathcal{L}_{ex}$, each assignment $\mathbf{f}$ introduces exactly one stable model, i.e. SUCC$^{sm}$ equals SUCC.  
In the extended program $\mathcal{L}_{sm} = \mathcal{L}_{ex}\cup \{e \leftarrow \lpnot f. f \leftarrow \lpnot e. \}$, however, all assignments have two stable models, one where $e$ is added and one where $f$ is added. Therefore, for each assignment $\mathbf{F}=\mathbf{f}$ it holds that $|\mods(\mathcal{L}_{sm} \mid_{\mathbf{f}})|=2$ but $|\mods(\mathcal{L}_{sm} \mid_{\mathbf{f}\cup\{e\}})|=1$. Thus SUCC\textsuperscript{sm}$(e) = 0.5$.
\end{example}



\subparagraph{\textit{SUCC\textsuperscript{sm} as 2AMC.}}
Computing SUCC\textsuperscript{sm} requires  (1) counting the number of models for a given total choice and those that satisfy the query $q$, and (2) summing the normalized probabilities. Thus, we have $\mathbf{X}_O=\mathbf{F}$ and $\mathbf{X}_I=\mathbf{H}\setminus\mathbf{F}$. 
$\mathcal{S}_I$ is the semiring over pairs of natural numbers, $\mathcal{S}_I=(\mathbb{N}^2, +, \cdot, (0,0), (1,1))$, where operations are component-wise. 
$\alpha_I$ maps $\lnot q$ to $(0,1)$ and all other literals to $(1,1)$. The first component thus only counts the models where $q$ is true ($|\mods(\mathcal{L} \mid_{\mathbf{f}\cup\{q\}})|$), whereas the second component counts all models ($|\mods(\mathcal{L} \mid_{\mathbf{f}})|$).
The transformation function is given by $t((n_1,n_2))=\frac{n_1}{n_2}$. 
The outer problem then corresponds to usual probabilistic inference, i.e., $\mathcal{S}_O=\mathcal{P}$, and $\alpha_O$ assigns   $p$ and $1-p$ to the positive and negative literals for probabilistic facts $p\dd f$, respectively, and $1$ to all other literals.



\section{Weakening $\mathbf{X}$-Firstness}
While any \twoAMC problem can be solved in polynomial time on an $\mathbf{X}_O$-first sd-DNNF representing the logical theory $\mathcal{T}$, such an sd-DNNF can be much bigger than the smallest (ordinary) sd-DNNF for $\mathcal{T}$, as the $\mathbf{X}$-firstness may severely restrict the order in which variables are decided. In the following, we show that for a wide class of transformation functions, we can exploit the logical structure of the theory to relax the $\mathbf{X}_O$-first property. 

Recall that a \twoAMC task includes an AMC task for every assignment $\mathbf{x}_O$ to the outer variables, which sums over all assignments $\mathbf{x}_I$ to the inner variables that extend $\mathbf{x}_O$ to a model of the theory. Consider the CNF $\mathcal{T} = \{y \vee \neg x, \neg y \vee x\}$ and let $\mathbf{X}_O = \{y\}$ and $\mathbf{X}_I = \{x\}$. The value of the outer variable $y$ already determines the value of the inner variable $x$. Distributivity allows us to pull $x$ out of every inner sum, as each such sum only involves one of the literals for $x$. If it does not matter whether we first apply the transformation function and then multiply or the other way around, we  have a choice between keeping $x$ in the inner semiring, or pushing its transformed version to the outer semiring. 
Thus, we can decide between  
an $\mathbf{X}_O$-first or an $\mathbf{X}_O\cup\{x\}$-first sd-DNNF. Naturally, the more such variables we have, the more freedom we gain. 

This situation might also occur after we have already decided some of the variables. Consider the CNF $\mathcal{T}' = \{z \vee y \vee \neg x, z \vee \neg y \vee x\}$ and let $\mathbf{X}_O = \{z,y\}$ and $\mathbf{X}_I = \{x\}$. If we set $z$ to true, both $y$ and $x$ can take any value, therefore the value of $x$ is not determined by $z$ and $y$. However, if we set $z$ to false, 
we are in the same situation as above and can move $x$ to $\mathbf{X}_O$ on a local level. 

We formalize this, starting with definability to capture when a variable is determined by others.
\begin{define}[Definability~\citep{lagniez2016improving}]
A variable $a$ is \emph{defined} by a set of variables $\mathbf{X}$ with respect to a theory $\mathcal{T}$ if for every assignment $\mathbf{x}$ of $\mathbf{X}$ it holds that $\mathbf{x} \cup \mathcal{T} \models a$ or $\mathbf{x} \cup \mathcal{T} \models \neg a$. 
We denote the set of variables that are not in $\mathbf{X}$ and defined by $\mathbf{X}$ with respect to $\mathcal{T}$ by $\mathbf{D}(\mathcal{T}, \mathbf{X})$.
\end{define}
\begin{example}[cont.]
In $\mathcal{L}_{ex}$ the atoms $c$ and $d$ are defined by $\{a,b\}$ since $c$ holds iff $a$ holds, and $d$ holds iff $b$ holds.
\end{example}

\begin{define}[$\mathbf{X}$-Firstness Modulo Definability]
Given an NNF $n$ on variables partitioned into $\mathbf{X},\mathbf{Y}$, we say an internal node $n_i$ of $n$ is pure \emph{modulo definability} if $\mathit{Vars}(n_i) \subseteq \mathbf{X} \cup \mathbf{D}(n_i, \mathbf{X})$ or $\mathit{Vars}(n_i) \subseteq \mathbf{Y}$ and mixed \emph{modulo definability}, otherwise. 
$n$ is an $\mathbf{X}$-first NNF \emph{modulo definability}, $\mathbf{X}/\mathbf{D}$-first NNF for short, if for each of its and-nodes $n_i$ either all children of $n_i$ are pure modulo definability, or one child of $n_i$ is mixed modulo definability and all other children $n_j$ of $n_i$ are pure modulo definability and $\mathit{Vars}(n_j) \subseteq \mathbf{X}\cup \mathbf{D}(n_i, \mathbf{X})$.
\end{define}
The intuition here is that we can decide variables from $\mathbf{Y}$ earlier, if they are defined by the variables in $\mathbf{X}$ in terms of the theory $\mathcal{T}$ conditioned on the decisions we have already made. Thus, we only decide the variables in $\mathbf{X}$ first \emph{modulo definability}.

\begin{example}[cont.]
In Figure~\ref{fig:sddnnfs}, the left NNF is an $\{a,b\}$-first NNF and therefore also an $\{a,b\}/\mathbf{D}$-first NNF. The right NNF is not an $\{a,b\}$-first NNF but an $\{a,b\}/\mathbf{D}$-first NNF since $\mathbf{D}(\mathcal{L}_{ex}, \{a,b\})$ contains $c$.
\end{example}
The following lemma generalizes this example from $2$ to $n$ pairs of equivalent variables.
\begin{lemma}[Exponential Separation]
\label{lem:size_sep}
Let $\mathcal{T} = \bigwedge_{i = 1}^{n} X_i \leftrightarrow Y_i$, $\mathbf{X} = \{X_1, \dots, X_n\}$ and $\mathbf{D} = \{Y_1, \dots, Y_n\}$, then the size of the smallest $\mathbf{X}$-first sd-DNNF for $\mathcal{T}$ is exponential in $n$ and the size of the smallest $\mathbf{X}/\mathbf{D}$-first sd-DNNF for $\mathcal{T}$ is linear in $n$.
\end{lemma}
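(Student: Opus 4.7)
The plan is to prove the two bounds independently, one by explicit construction and one by a counting argument on conditioned subformulas.

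For the linear upper bound, I would give an explicit $\mathbf{X}/\mathbf{D}$-first sd-DNNF of size $O(n)$ for $\mathcal{T}$. Take the top-level and-node whose $n$ children are gadgets $G_i = (X_i \wedge Y_i) \vee (\neg X_i \wedge \neg Y_i)$ representing $X_i \leftrightarrow Y_i$. This circuit has $7n+1$ nodes, so is linear in $n$. Decomposability, determinism, and smoothness are all immediate: distinct gadgets share no variables, the two disjuncts of each $G_i$ are mutually exclusive and each mentions $\{X_i,Y_i\}$. The crucial check is $\mathbf{X}/\mathbf{D}$-firstness. For each or-node $G_i$, $\mathit{Vars}(G_i)=\{X_i,Y_i\}$, and fixing either literal for $X_i$ in $G_i$ forces a unique literal for $Y_i$, so $Y_i \in \mathbf{D}(G_i,\mathbf{X})$ and $G_i$ is pure modulo definability. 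The inner and-nodes of a gadget have literal children, which are trivially pure. Hence the top and-node has only pure-modulo-definability children and the circuit is $\mathbf{X}/\mathbf{D}$-first.

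For the exponential lower bound, I would exploit the remark stated directly after the definition of $\mathbf{X}$-firstness: any $\mathbf{X}$-first NNF $n$ representing $\mathcal{T}$ contains, for every $\mathbf{x} \in \inter(\mathbf{X})$, a node $n_{\mathbf{x}}$ equivalent to $n\mid_{\mathbf{x}} = \mathcal{T}\mid_{\mathbf{x}}$. (This follows by a short induction on the NNF: at each and-node one isolates the pure-$\mathbf{X}$ children from the mixed child using decomposability, and at each or-node smoothness guarantees that every $\mathbf{x}$ is realised along some branch.) Now, for the specific theory $\mathcal{T} = \bigwedge_{i=1}^n (X_i \leftrightarrow Y_i)$, the conditionings $\mathcal{T}\mid_{\mathbf{x}}$ for the $2^n$ different $\mathbf{x}$'s are the $2^n$ distinct complete conjunctions of literals over $\mathbf{Y}$, hence pairwise inequivalent propositional formulas. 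Therefore the nodes $n_{\mathbf{x}}$ are pairwise distinct, giving at least $2^n$ nodes in any $\mathbf{X}$-first sd-DNNF for $\mathcal{T}$.

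The main obstacle is the lower-bound direction, specifically justifying the existence of the $2^n$ distinct conditioned nodes. If the remark after the $\mathbf{X}$-firstness definition is taken as given, the argument reduces to observing that the $\mathcal{T}\mid_{\mathbf{x}}$ are pairwise inequivalent, which is immediate. If instead one wants a self-contained proof, the real work lies in that short structural induction: one must argue that in an $\mathbf{X}$-first NNF every maximal mixed subcircuit is reached only after fully determining an assignment to $\mathbf{X}$, and that by smoothness each such assignment is actually realised. The upper-bound direction, by contrast, is essentially a direct verification and should not present any difficulty beyond keeping the definitions straight.
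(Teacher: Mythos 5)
Your proof is correct and follows essentially the same route as the paper: the lower bound is identical (invoke the remark that an $\mathbf{X}$-first NNF contains a node equivalent to $\mathcal{T}\mid_{\mathbf{x}}$ for each $\mathbf{x}$, and note the $2^n$ residual theories are pairwise inequivalent complete conjunctions over $\mathbf{Y}$), and both treat that remark as given. For the upper bound the paper argues slightly more abstractly --- since $\mathbf{D}(\mathcal{T},\mathbf{X})=\mathbf{Y}$, \emph{every} sd-DNNF for $\mathcal{T}$ is $\mathbf{X}/\mathbf{D}$-first, and treewidth $2$ gives a linear-size one --- whereas your explicit gadget circuit with its node-by-node purity check is an equally valid (and, since the definition requires definability relative to each node $n_i$ rather than the global theory, arguably more careful) instantiation of the same idea.
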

\begin{proof}[Proof (Sketch)]
Since $\mathbf{D}(\mathcal{T}, \mathbf{X}) = \mathbf{Y}$, every sd-DNNF for $\mathcal{T}$ is an $\mathbf{X}/\mathbf{D}$-first sd-DNNF. As  $\mathcal{T}$ has treewidth 2,  there exists an sd-DNNF of linear size. On the other hand, an $\mathbf{X}$-first sd-DNNF must contain a node that is equivalent to $\mathcal{T}\mid_{\mathbf{x}}$ for each of the $2^{|\mathbf{X}|}$ assignments $\mathbf{x} \in \inter(\mathbf{X})$.
\end{proof}
We see that $\mathbf{X}/\mathbf{D}$-first sd-DNNFs can be much smaller than $\mathbf{X}$-first sd-DNNFs, even on very simple propositional theories.
It remains to show that we maintain tractability. As in the beginning of this section, we want to regard defined inner variables as outer variables. For this to work it must not matter whether we first multiply and then apply the transform $\transform$ or the other way around, i.e., $\transform$ must be a \emph{homomorphism} for the multiplications of the semirings. 
\begin{define}[Monoid Homomorphism, Generated Monoid]
Let $\mathcal{M}_i = (M_i, \odot^i, e_{\odot^i})$ for $i = 1,2$ be monoids. Then a \emph{monoid homomorphism} from $\mathcal{M}_1$ to $\mathcal{M}_2$ is a function $f: M_1 \rightarrow M_2$ such that 
\begin{align*}
 \text{for all $m, m' \in M_1$ } & & f(m \odot^1 m') = f(m) \odot^2 f(m') & & \text{ and } & & f(e_{\odot^1}) = e_{\odot^2}.    
\end{align*}
Furthermore, for a subset $M' \subseteq M$ of a monoid $\mathcal{M} = (M, \odot, e_{\odot})$ the \emph{monoid generated by $M'$}, denoted $\langle M' \rangle_{\mathcal{M}}$, is $\mathcal{M}^* = (M^*, \odot, e_{\odot})$, where $M' \subseteq M^*$ and $M^* $ is the subset minimal set such that $\mathcal{M}^*$ is a monoid.
\end{define}

\begin{example}[cont.]
Consider again the \twoAMC instance $A_{ex}$ from Example~\ref{ex:twoamc}. Since $a$ is defined in terms of $c$, we want to argue that the following equality holds, allowing us to see $a$ as an outer variable:
\begin{align*}
    &\textstyle\max_{\mathbf{x_O} \in \inter(\{c\})} \alpha_{O}(\mathbf{x_O})\cdot id\left(\sum_{\mathbf{x_I} \in \mods(\mathcal{L}_{ex}\mid_{\mathbf{x}_O})} \prod_{y \in \mathbf{x_I}} \alpha_I(y)\right)\\
   = & \textstyle\max_{\mathbf{x_O} \in \inter(\{c\}),\mathbf{x_{O'}} \in \inter(\{a\})}\alpha_{O}(\mathbf{x_O})\cdot id(\alpha_{I}(\mathbf{x_{O'}}))\cdot id\left(\sum_{\mathbf{x_I} \in \mods(\mathcal{L}_{ex}\mid_{\mathbf{x}_O \cup \mathbf{x}_{O'}})} \prod_{y \in \mathbf{x_I}} \alpha_I(y)\right)
\end{align*}
Here, this is easy to see since $id$ is a homomorphism between the monoid $([0,1], \cdot)$ of the inner probability semiring and the monoid  $(\mathbb{R}_{\geq 0}, \cdot)$ of the outer max-times semiring, as $id: [0,1]\rightarrow \mathbb{R}_{\geq 0}$, for any $p,q\in[0,1]$, $id(p \cdot q) = p\cdot q = id(p) \cdot id(q)$, and $id(1)=1$. 
\end{example}
In general, instead of applying the transform to a sum of products of literal labels for a set of variables, we want to apply it independently to (1) the literal labels of a defined variable and (2) the inner sum restricted to the remaining variables.  It is therefore sufficient if the equality is valid for the monoid generated by the values we encounter in these situations, rather than for all values from the inner monoid's domain. As we will illustrate for MEU at the end of this section, the transform of some 2AMC tasks only satisfies this weaker but sufficient condition.
The following definition captures this idea, where the two subsets of $O(A)$ correspond to the values observed in  cases (1) and (2), respectively:
\begin{define}[Observable Values]
\label{def:obs}
Let $A = (\mathcal{T}, \mathbf{X}_I, \mathbf{X}_O, \alpha_I, \alpha_O, \mathcal{S}_I, \mathcal{S}_O, \transform)$ be a \twoAMC instance. Then the set of \emph{observable values} of $A$, denoted $O(A)$ is
\begin{align*}
    &\{ \alpha_I(l) \mid \mathbf{x}_O \in \inter(\mathbf{X}_O), y \in \mathbf{D}(\mathcal{T}\cup \mathbf{x}_O, \mathbf{X}_O), l \in \{y, \neg y\}\}\\
    \cup &\{\srbplus^{I}_{\mathbf{x}_I \in \mods(\mathcal{T} \mid_{\mathbf{x}_O})} \srbtimes^{I}_{y \in \mathbf{x}_I} \alpha_I(y) \mid \mathbf{x}_O \in \inter(\mathbf{X}_O \cup \mathbf{D}^*), \mathbf{D}^* \subseteq \mathbf{D}(\mathcal{T}\cup \mathbf{x}_O, \mathbf{X}_O)\}.
\end{align*}
\end{define}


With this in mind, we can state our main result.
\begin{theorem}[Tractable AMC with $\mathbf{X}/\mathbf{D}$-first sd-DNNFs]
\label{thm:tract2}
The value of a \twoAMC instance $A = (\mathcal{T}, \mathbf{X}_I, \mathbf{X}_O, \alpha_I, \alpha_O, \mathcal{S}_I, \mathcal{S}_O, \transform)$ can be computed in polynomial time, assuming constant time semiring operations, under the following conditions:
\begin{itemize}
    \item $\mathcal{T}$ is an $\mathbf{X}/\mathbf{D}$-first sd-DNNF
    \item $\transform$ is a homomorphism from the monoid $\langle O(A) \rangle_{(R_I, \srstimes^{I}, e_{\srstimes^{I}})}$ generated by the observable values to $(R_O, \srstimes^{O}, e_{\srstimes^{O}})$.
\end{itemize} 
\end{theorem}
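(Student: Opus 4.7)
The plan is to generalize the bottom-up evaluation of Theorem~\ref{thm:tract1} to the weaker firstness condition by pushing the transform $\transform$ across each and-node that isolates defined variables from the mixed remainder. I will maintain the invariant that every node $n$ of $\mathcal{T}$ is assigned a single semiring value $V(n)$, living in $S_I$ when $\mathit{Vars}(n) \subseteq \mathbf{X}_I$ and in $S_O$ otherwise. Leaves receive $\alpha_I$ or $\alpha_O$ according to which side of the partition their literal lies in.

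For an or-node, smoothness forces all children to share the same variable set and hence the same semiring type, and we take their $\srbplus^{I}$- or $\srbplus^{O}$-sum. For an and-node $n_i$ with $\mathit{Vars}(n_i) \subseteq \mathbf{X}_I$ we take the $\srbtimes^{I}$-product of the children's values. Otherwise, $\mathbf{X}/\mathbf{D}$-firstness ensures that at most one child $m$ is mixed modulo definability, while every other child $n_j$ is pure modulo definability with $\mathit{Vars}(n_j) \subseteq \mathbf{X}_O \cup \mathbf{D}(n_i, \mathbf{X}_O)$; for each such $n_j$ whose value still lies in $S_I$ we replace $V(n_j)$ by $\transform(V(n_j)) \in S_O$, and we then $\srbtimes^{O}$-multiply all these outer values together with $V(m)$ (if $m$ is present), obtaining $V(n_i) \in S_O$. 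At the root this yields $\twoAMC(A)$.

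Correctness is by structural induction on the invariant that, for any node $n$, if $\mathit{Vars}(n) \subseteq \mathbf{X}_I$ then $V(n) = \srbplus^{I}_{M \in \mods(n)} \srbtimes^{I}_{l \in M} \alpha_I(l)$, and otherwise
\begin{align*}
V(n) = \srbplus^{O}_{\mathbf{x}_O} \Bigl(\srbtimes^{O}_{x \in \mathbf{x}_O} \alpha_O(x)\Bigr) \srstimes^{O} \transform\Bigl(\srbplus^{I}_{\mathbf{x}_I \in \mods(n\mid_{\mathbf{x}_O})} \srbtimes^{I}_{y \in \mathbf{x}_I} \alpha_I(y)\Bigr),
\end{align*}
with $\mathbf{x}_O$ ranging over $\inter(\mathit{Vars}(n) \cap \mathbf{X}_O)$. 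Or-nodes are immediate from determinism and smoothness. For an and-node with pure-mod-def siblings $n_1,\dots,n_k$ and mixed child $m$, decomposability factorises both the outer sum and the inner AMC along the children; since each sibling $n_j$ lives over $\mathbf{X}_O \cup \mathbf{D}(n_i, \mathbf{X}_O)$, each fixed $\mathbf{x}_O$ uniquely pins the defined variables inside $n_j$, so its inner contribution is a single product of $\alpha_I$-labels. The homomorphism hypothesis on $\transform$ then lets us pull $\transform$ across these inner products and commute it with the outer multiplication by $V(m)$, reproducing exactly the evaluation rule.

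The main obstacle is verifying that every argument handed to $\transform$ lies in the monoid $\langle O(A) \rangle_{(S_I, \srstimes^{I}, e_{\srstimes^{I}})}$ on which $\transform$ is a homomorphism. Invocations of $\transform$ come in two flavours: (i) on the inner value of a subformula whose only free variables are defined ones just committed above, which matches the first item of Definition~\ref{def:obs}, and (ii) on the inner AMC value of a subformula whose remaining inner variables are those still free after the outer assignment and already-committed defined variables are fixed, matching the second item of Definition~\ref{def:obs}. Any product formed during the evaluation of such values remains in $\langle O(A) \rangle$, so the homomorphism hypothesis applies at every invocation. Finally, each node is visited once and performs $O(1)$ semiring operations plus at most one evaluation of $\transform$, so the total cost is polynomial in $|\mathcal{T}|$.
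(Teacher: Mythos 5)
Your proof takes a genuinely different route from the paper's. The paper argues globally: it uses distributivity to pull the label of a defined variable out of the inner sum, applies the homomorphism to push that label through $\transform$, and thereby rewrites $A$ into an equivalent \twoAMC instance in which the defined variable has been moved to $\mathbf{X}_O$ (with new label $\transform(\alpha_I(\cdot))$); iterating this and interleaving it with conditioning on outer variables reduces everything to Theorem~\ref{thm:tract1} on an $\mathbf{X}_O\cup\mathbf{D}$-first circuit. You instead give a direct bottom-up evaluation with a per-node invariant. Your version is closer to an implementable algorithm and makes the role of the node-local sets $\mathbf{D}(n_i,\mathbf{X}_O)$ (context-dependent definability) explicit, which the paper only gestures at via its induction; the paper's version buys reuse of Theorem~\ref{thm:tract1} and keeps all semiring manipulation at the level of the \twoAMC expression rather than the circuit.

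There is, however, a concrete gap in your and-node rule. You claim that $\mathbf{X}/\mathbf{D}$-firstness ensures every non-mixed child $n_j$ satisfies $\mathit{Vars}(n_j)\subseteq\mathbf{X}_O\cup\mathbf{D}(n_i,\mathbf{X}_O)$, but the definition's first disjunct (``all children are pure modulo definability'') also admits children that are pure with $\mathit{Vars}(n_j)\subseteq\mathbf{X}_I$ and contain \emph{undefined} inner variables, alongside outer-pure siblings. Take $\mathcal{T}= x\wedge(u\vee u')\wedge(v\vee v')$ with $\mathbf{X}_O=\{x\}$: the natural and-node has one outer-pure child and two inner-pure children with undefined variables, and is legal. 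Your rule either does not apply to such children or, on the natural reading, transforms each one separately, computing $\transform(S_u)\srstimes^{O}\transform(S_v)$ instead of $\transform(S_u\srstimes^{I}S_v)$. The fragments $S_u,S_v$ need not lie in $\langle O(A)\rangle$ (only the full inner sums do), so the homomorphism hypothesis does not license this split; for the MEU transform $\transform((p,pu))=pu$ the two expressions genuinely differ when a fragment has first component $p<1$. The fix is to first take the $\srbtimes^{I}$-product of \emph{all} inner-pure children (defined or not) and apply $\transform$ once to that product; only children consisting entirely of locally defined variables may be transformed individually, since these have a unique model and their values are products of first-kind observable values. With that repair, and with your case analysis rewritten to follow the two disjuncts of the $\mathbf{X}/\mathbf{D}$-firstness definition rather than your paraphrase of it, the argument goes through at the same level of rigour as the paper's own sketch.
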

\begin{proof}[Proof (Sketch)]
The proof of this theorem exploits (a) distributivity and the form of the transformation function to move defined variables to the outer semiring as outlined at the start of this section, and (b) the fact that when we decide an outer variable, then we get two new \twoAMC instances, where the theory $\mathcal{T}$ is conditioned on the truth of the decided variable. On these new instances, we can also use definability in the same fashion as before.
\end{proof}

Despite the fact that checking which variables are defined for each partial assignment $\mathbf{x}$ is not feasible, as checking definability is co-NP-complete and there are more than $2^{|\mathbf{X}_O|}$ partial assignments, this result does have implications for constrained KC in practice. 


Firstly, we can check which variables are defined by $\mathbf{X}_O$ in terms of the whole theory $\mathcal{T}$,  and use this to generate a variable order for compilation that leads to an $\mathbf{X}/\mathbf{D}$-first sd-DNNF with a priori guarantees on its size. We discuss this in Section~\ref{sec:implementation}.

Secondly, as entailment is a special case of definability, 
Theorem~\ref{thm:tract2} justifies the use of unit propagation during compilation, which dynamically adapts the variable order when variables are entailed by the already decided ones, and thus may violate $\mathbf{X}$-firstness.

We still need to verify that our three example tasks satisfy the preconditions of Theorem~\ref{thm:tract2}, i.e., their transformation functions are monoid homomorphisms on the observable values. 
For MAP and SUCC\textsuperscript{sm} this is easy to prove, as the transformation is already a homomorphism on all values. 
For MEU, however, the restriction to the monoid generated by the observable values is crucial, as the  transformation function is only a homomorphism for tuples $(p, pu)$ with $p \in \{0,1\}$, which may not be the case in general. In the \dtp setting, however, assignments to decision facts and probabilistic facts are independent, and every assignment to both sets extends to a single model of the theory. Together with the $(1,u)$-labels of the remaining atoms, this ensures that the result of the inner sum is of the form~$(1,x)$, and MEU thus meets the criteria. 

\section{Implementation}\label{sec:implementation}
The general pipeline of PLP solvers takes a program, grounds it and optionally simplifies it or breaks its cycles. Using standard KC tools, this program is then  compiled into a tractable circuit either directly or via conversion to CNF.
We extended this pipeline in \aspmc~\citep{eiter2021treewidth} and \smp~\citep{totis2021smproblog} to compile programs, via CNF, into $\mathbf{X}/\mathbf{D}$-first circuits. 

To obtain $\mathbf{X}/\mathbf{D}$-first circuits we 
need to specify a variable order in which all variables in $\mathbf{X}$ are decided first modulo definedness. 
Preferably, the chosen order should also result in efficient compilation and thus a small circuit.
\cite{korhonen2021integrating} have shown how to generate  variable orders  from \emph{tree decompositions} of the primal graph of the CNF that result in (unconstrained) sd-DNNFs whose size is bounded by the \emph{width} of the decomposition. We adapt this result to our constrained setting, where we need to ensure that the tree decomposition satisfies an additional property that allows compilation to essentially consider the non-defined inner variables and the outer variables independently. We first define the necessary concepts. 

\begin{define}[Primal Graph, Tree Decomposition]
Let $\mathcal{T}$ be a CNF over variables $\mathbf{X}$. The \emph{primal graph} of $\mathcal{T}$, denoted by $\PRIM(\mathcal{T})$, is defined as $V(\PRIM(\mathcal{T})) = \mathbf{X}$ and $(v_1,v_2) \in E(\PRIM(\mathcal{T}))$ if $v_1, v_2$ occur together in some clause of~$\mathcal{T}$.

A \emph{tree decomposition (TD)} for a graph $G$ is a pair $(T, \chi)$, where $T$ is a tree and $\chi$ is a labeling of $V(T)$ by subsets of $V(G)$ s.t. (i) for all nodes $v \in V(G)$ there is $t \in V(T)$ s.t. $v \in \chi(t)$; (ii) for every $(v_1, v_2) \in V(E)$ there exists $t \in V(T)$ s.t. $v_1, v_2 \in \chi(t)$ and (iii) for all nodes $v \in V(G)$ the set of nodes $\{t \in V(T) \mid v \in \chi(t)\}$ forms a connected subtree of $T$.
The width of $(T, \chi)$ is $\max_{t \in V'} |\chi(t)| - 1$.
\end{define}
Next, we show that restricted TDs allow $\mathbf{X}/\mathbf{D}$-first compilation with performance guarantees:
\begin{lemma}
\label{lem:XD-tree}
Let $\mathcal{T}$ be a CNF over variables $\mathbf{Y}$ and $(T,\chi)$ a TD of $\PRIM(\mathcal{T})$ of width $k$. 
Furthermore, let $\mathbf{X} \subseteq \mathbf{Y}$ and $\mathbf{D} = \mathbf{D}(\mathcal{T}, \mathbf{X})$. 
If there exists $t^* \in V(T)$ such that (1) $\chi(t^*) \subseteq \mathbf{X} \cup \mathbf{D}$ and (2) $\chi(t^*)$ is a \emph{separator} of $\mathbf{X}$ and $\mathbf{Y}\setminus (\mathbf{X}\cup\mathbf{D})$, i.e., every path from $\mathbf{X}$ to $\mathbf{Y}\setminus (\mathbf{X}\cup\mathbf{D})$ in $\PRIM(\mathcal{T})$ uses a vertex from $\chi(t^*)$, then we can compile $\mathcal{T}$ into an $\mathbf{X}/\mathbf{D}$-first sd-DNNF in time $\mathcal{O}(2^{k}\cdot \text{poly}(|\mathcal{T}|))$. 
\end{lemma}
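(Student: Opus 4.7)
The plan is to root $T$ at $t^*$, use the separator property to split $\mathcal{T}$ into two essentially independent sub-theories, compile each with the standard TD-based algorithm of \cite{korhonen2021integrating}, and combine the results via a top-level Shannon expansion on $\chi(t^*)$. Set $S := \chi(t^*)$ and partition $\mathbf{Y}\setminus S = \mathbf{Y}_1\cup\mathbf{Y}_2$ where $\mathbf{Y}_1$ is the set of vertices reachable from $\mathbf{X}\setminus S$ in $\PRIM(\mathcal{T})-S$. Conditions~(1)--(2) then give $S\cup\mathbf{Y}_1\subseteq\mathbf{X}\cup\mathbf{D}$ and $\mathbf{Y}_2\cap\mathbf{X}=\emptyset$. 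Because every clause of $\mathcal{T}$ induces a clique in $\PRIM(\mathcal{T})$ and no edge of $\PRIM(\mathcal{T})-S$ crosses between $\mathbf{Y}_1$ and $\mathbf{Y}_2$, each clause lies entirely in $S\cup\mathbf{Y}_1$ or entirely in $S\cup\mathbf{Y}_2$, yielding a partition $\mathcal{T}=\mathcal{T}_1\cup\mathcal{T}_2$.

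From $T$ I would derive TDs $(T^{(i)},\chi^{(i)})$ of $\PRIM(\mathcal{T}_i)$ of width at most $k$ by projecting each bag of $T$ onto $\mathit{Vars}(\mathcal{T}_i)$; this remains a valid TD since the three TD axioms are preserved under such restriction. Applying the compilation of \cite{korhonen2021integrating} to each yields sd-DNNFs $n_1,n_2$ of size $\mathcal{O}(2^{k}\cdot\text{poly}(|\mathcal{T}|))$ representing $\mathcal{T}_1,\mathcal{T}_2$. I then combine them into
\[
  n \;=\; \bigvee_{\mathbf{s}\in\inter(S)} \Bigl(\,\textstyle\bigwedge_{l\in\mathbf{s}} l\,\Bigr) \wedge n_1|_{\mathbf{s}} \wedge n_2|_{\mathbf{s}},
\]
which is logically $\mathcal{T}_1\wedge\mathcal{T}_2=\mathcal{T}$; determinism follows from mutual exclusion of the $\mathbf{s}$-conditionings, decomposability from the disjoint variable sets of the three conjuncts, and smoothness by standard post-processing. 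The total size is $2^{|S|}\cdot(|n_1|+|n_2|)=\mathcal{O}(2^{k}\cdot\text{poly}(|\mathcal{T}|))$.

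It then remains to verify $\mathbf{X}/\mathbf{D}$-firstness. Every variable of $n_2|_{\mathbf{s}}$ lies in $\mathbf{Y}_2\subseteq\mathbf{Y}\setminus\mathbf{X}$, so every internal node of $n_2|_{\mathbf{s}}$ is pure on the $\mathbf{Y}$-side; the literal conjuncts are pure leaves; and every variable of $n_1|_{\mathbf{s}}$ lies in $\mathbf{X}\cup\mathbf{D}$, so at the top the only potentially mixed child is $n_1|_{\mathbf{s}}$, placed alongside pure siblings whose variables all lie in $\mathbf{X}\cup\mathbf{D}$. The hard part will be showing that each internal and-node of $n_1|_{\mathbf{s}}$ satisfies the $\mathbf{X}/\mathbf{D}$-first condition, i.e., that every variable from $\mathbf{D}$ appearing in a subformula is still definable from $\mathbf{X}$ with respect to that subformula. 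I would establish this by noting that each subformula at a compilation node is equivalent to a strengthening of $\mathcal{T}_1$ (obtained by fixing literals chosen above in the circuit and/or restricting to a subset of clauses), and that definability is monotone under strengthening: if $\mathbf{x}\cup\mathcal{T}_1\models a$ for every $\mathbf{x}\in\inter(\mathbf{X})$, then $\mathbf{x}\cup\mathcal{T}_1\cup\phi\models a$ for any additional formula $\phi$, so every globally defined variable remains locally defined.
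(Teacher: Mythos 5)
Your overall strategy---decide the separator bag $\chi(t^*)$ first and exploit the resulting component split---is in the same spirit as the paper's proof, which simply roots the tree decomposition at $t^*$ and runs the TD-guided compilation of Korhonen and J\"arvisalo once. There are, however, two genuine problems with your realization. First, the size bound: your explicit Shannon expansion over $\inter(S)$ glued to two independently compiled circuits of size $\mathcal{O}(2^{k}\cdot\text{poly}(|\mathcal{T}|))$ gives, by your own accounting, $2^{|S|}\cdot(|n_1|+|n_2|)$ with $|S|$ as large as $k+1$, i.e.\ $\mathcal{O}(2^{2k+1}\cdot\text{poly}(|\mathcal{T}|))=\mathcal{O}(4^{k}\cdot\text{poly}(|\mathcal{T}|))$, not the claimed $\mathcal{O}(2^{k}\cdot\text{poly}(|\mathcal{T}|))$. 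Rooting a single TD-guided compilation at $t^*$ avoids paying for the bag twice, because the branching over $\chi(t^*)$ is already accounted for inside the one $2^{k}$ factor.

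Second, and more importantly, your argument for what you correctly identify as the hard part is broken. You claim every subformula arising inside $n_1|_{\mathbf{s}}$ is a \emph{strengthening} of $\mathcal{T}_1$ ``obtained by fixing literals \dots and/or restricting to a subset of clauses,'' and then invoke monotonicity of definability under strengthening. But restricting to a subset of clauses is a \emph{weakening}, not a strengthening (fewer constraints, more models), and definability is not monotone under weakening: $a$ is defined by $b$ with respect to $\{a\leftrightarrow b\}$ but not with respect to the empty theory. Since TD-guided compilation decomposes the residual CNF into connected components at essentially every node, the subformula at an internal node is in general a sub-conjunction of conditioned clauses, and your monotonicity argument simply does not apply to it. The property you actually need---that a globally defined $d$ stays defined with respect to the component containing it---must be argued from the structure of the decomposition: all residual clauses mentioning $d$ lie in $d$'s own component, and the discarded components can influence entailment of $d$ only through unsatisfiability, which the compiler prunes. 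This step cannot be waved through, because if local definability of some $d$ in $n_1|_{\mathbf{s}}$ failed, your top-level and-node would violate the $\mathbf{X}/\mathbf{D}$-first definition outright: its sibling $n_2|_{\mathbf{s}}$ has variables outside $\mathbf{X}\cup\mathbf{D}$, so $n_1|_{\mathbf{s}}$ is not allowed to be the mixed child there.
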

\begin{proof}[Proof (Sketch).]
The performance guarantee is due to~\cite{korhonen2021integrating} and holds when we decide the variables in the order they occur in the TD starting from the root.
$\mathbf{X}/\mathbf{D}$-firstness can be guaranteed by taking $t^*$ as the root of the TD and, thus, first deciding all variables in $\chi(t^*)$. 
From condition (2) it follows that afterwards the CNF has decomposed into separate components, which either only use variables from $\mathbf{X} \cup \mathbf{D}$ or use no variables from $\mathbf{X}$. Thus, their compilation only leads to pure NNFs.
\end{proof}

To find a TD of small width for which the lemma applies, we proceed as follows. 
Given a CNF $\mathcal{T}$ and partition $\mathbf{X}_I, \mathbf{X}_O$ of the variables in inner and outer variables, we first compute $\mathbf{D}(\mathcal{T}, \mathbf{X}_O)$ by using an \NP-oracle. \cite{lagniez2016improving} showed that this is possible. 
As the width of a suitable  TD is at least the size of the separator minus one,  we first approximate a minimum size separator $\mathbf{S} \subseteq \mathbf{X} \cup \mathbf{D}(\mathcal{T}, \mathbf{X})$ using clingo~\citep{gebser2014clingo} with a timeout of 30 seconds. 
To ensure that the TD contains a node $t'$ with $\mathbf{S}\subseteq \chi(t')$, we add the clique over the vertices in $\mathbf{S}$ to the primal graph, i.e., we generate a tree decomposition $(T, \chi)$ of $\PRIM(\mathcal{T}) \cup \text{Clique}(\mathbf{S})$. For this, we use flow-cutter~\citep{Dell17} with a timeout of 5 seconds. The resulting TD either already contains a node with $\mathbf{S}= \chi(t')$ and thus satisfies the preconditions of Lemma~\ref{lem:XD-tree}, or can be modified locally to one that does by splitting the node with $\mathbf{S}\subset \chi(t')$. 

\paragraph{aspmc:}\footnote{\textsf{aspmc} is open source and available at \href{https://github.com/raki123/aspmc}{github.com/raki123/aspmc}}
We use the above strategy to generate a variable order, which 
is then given to c2d~\citep{Darwiche04} or miniC2D~\citep{OztokDarwiche15} together with the CNF to compile an  $\mathbf{X}/\mathbf{D}$-first circuit, on which we evaluate the \twoAMC task. We stress that c2d -- contrary to miniC2D -- always uses unit propagation during compilation, and thus 
can only be used  due to Theorem~\ref{thm:tract2}.

Currently, \aspmc supports \dtp, MAP and \smp programs as inputs.

\paragraph{smProbLog:} \footnote{\smp is open source and available at \href{https://github.com/PietroTotis/smProblog}{github.com/PietroTotis/smProblog}}The \smp implementation of \cite{totis2021smproblog} uses \textsc{dsharp}~\citep{DBLP:conf/aaai/AzizCMS15} 
to immediately compile the logic program to a  d-DNNF, which prevents a direct application of our new techniques. Our adapted version obtains  a CNF $\mathcal{T}$ and  variable ordering as in \aspmc, which it then compiles to SDD~\citep{darwiche2011sdd} using the \emph{PySDD} library\footnote{\href{https://github.com/wannesm/PySDD}{github.com/wannesm/PySDD}} as in standard ProbLog. SDDs can be seen as a special case of sd-DNNFs. The main difference is that for each branch the variables are decided in the same order. 


\section{Experimental Evaluation}
Our experimental evaluation addresses the following questions:
\begin{itemize}
    \item[Q1:] How does exploiting definedness influence the efficiency of \twoAMC solving? 
    \item[Q2:] How does \aspmc compare to task-specific solvers from the PLP literature?
    \item[Q3:] How does our second level approach compare to the first level approach when definedness reduces \twoAMC to AMC?
\end{itemize}
\subsection{General Setup}
To answer these questions, we consider logic programs from the literature on the three example \twoAMC tasks. To eliminate differences on how different solvers handle n-ary random choices (known as annotated disjunctions) and 0/1-probabilities, we normalize probabilistic programs to contain only probabilistic facts and normal clauses, and replace all probabilities by values chosen uniformly at random from $0.1, 0.2, ..., 0.8, 0.9$.

For MAP, we use the growing head, growing negated body, blood and graph examples of \cite{bellodi2020map}, with a subset of the probabilistic facts of uniformly random size as query atoms and the given evidence. 
For MEU, we use the Bayesian networks provided by \cite{derkinderen2020algebraic} as well as the viral marketing example from~\cite{dtproblog} on randomly generated power law graphs that are known to resemble social networks~\citep{barabasi2003scale}. 
For SUCC\textsuperscript{sm}, we use an example  from~\cite{totis2021smproblog} that introduces non-probabilistic choices into the well-known smokers example \citep{problog2system}. 

Besides this basic set, we also use the original smokers example, where SUCC\textsuperscript{sm} reduces to SUCC, for Q3. For Q1, we use the grid graphs of \cite{problog2system} as an additional MAP benchmark. Here, we control definedness by choosing the MAP queries as the probabilistic facts for the edges that are reachable in $k$ steps from the top left corner, for all possible values of $k$, and use the existence of a path from the top left to the bottom right corner as evidence.

We compare the following systems:
\begin{description}
    \item[\aspmc] with 
    c2d as default knowledge compiler
    \item[ProbLog] in different versions: ProbLog2 (version 2.1.0.42) in MAP and default (SUCC) mode, the implementation of \cite{derkinderen2020algebraic} for MEU, and our implementation of \smp for SUCC\textsuperscript{sm}
    \item[PITA] (\cite{bellodi2020map}, version 4.5, included in SWI-Prolog) for MAP and MEU
    \item[clingo] (\cite{gebser2014clingo}, version 5.5.0.post3) as an indicator of how an enumeration based approach not using knowledge compilation might perform. Note that this approach does not actually compute the \twoAMC value of the formula, but only enumerates models.
\end{description}

We limit each individual run of  a system to 300 seconds and 4Gb of memory. When plotting running time per instance for a specific solver, we always sort instances by ascending time for that solver, using 300 seconds for any instance that did not finish successfully within these bounds.

The instances, results and benchmarking scripts are available at \href{https://github.com/raki123/CC}{github.com/raki123/CC}.

\subsection{Results}
\paragraph{\textbf{Q1}: How does exploiting definedness influence the efficiency of \twoAMC solving? }
To answer the first question, we use all proper \twoAMC benchmarks, and focus on the \twoAMC task itself, i.e., we start from the labeled CNF corresponding to the instance. We consider four different settings obtained by independently varying two dimensions: constraining compilation to either $\mathbf{X}$-first or $\mathbf{X}/\mathbf{D}$-first, and compiling to either sd-DNNF using c2d or to SDD using miniC2D. 

On the left of Figure~\ref{fig:widths}, we plot the width of the tree decompositions the solver uses to determine the variable order in the $\mathbf{X}$-first or $\mathbf{X}/\mathbf{D}$-first case, respectively. Recall from Lemma~\ref{lem:XD-tree} that this width appears in the exponent of the compilation time bound. The \emph{optimal} tree decomposition's width in the  $\mathbf{X}/\mathbf{D}$-first case is at most that of the $\mathbf{X}$-first case, and would thus result in points on or below the black diagonal only. In practice,
we observe many points close to the diagonal, with two notable exceptions. MAP instances with high width tend to be slightly above the diagonal, whereas MAP grids are mostly clearly below the diagonal. These results can be explained by the shape of the problems and the fact that we only approximate the optimal decomposition, as this is a hard task. We note that for many of the benchmarks, the amount of  variables defined in terms of the outer variables (decision variables for MEU, query variables for MAP, probabilistic facts for SUCC$^{SM}$) is limited. The exception are the MAP grids, where the choice of queries entails definedness. 

We plot the same data restricted to the instances solved within the time limit on the right of Figure~\ref{fig:widths}, along with summary statistics on the number of instances solved for three ranges of $\mathbf{X}/\mathbf{D}$-width in the caption. We observe that almost no instances with $\mathbf{X/D}$-width  above 40 are solved. At the same time, almost all instances with $\mathbf{X/D}$-width  below 20 are solved, including many cases with $\mathbf{X}$-width above 40, where we thus see a clear benefit from exploiting definedness.  


\begin{figure}
    \centering
    \includegraphics[width=\textwidth]{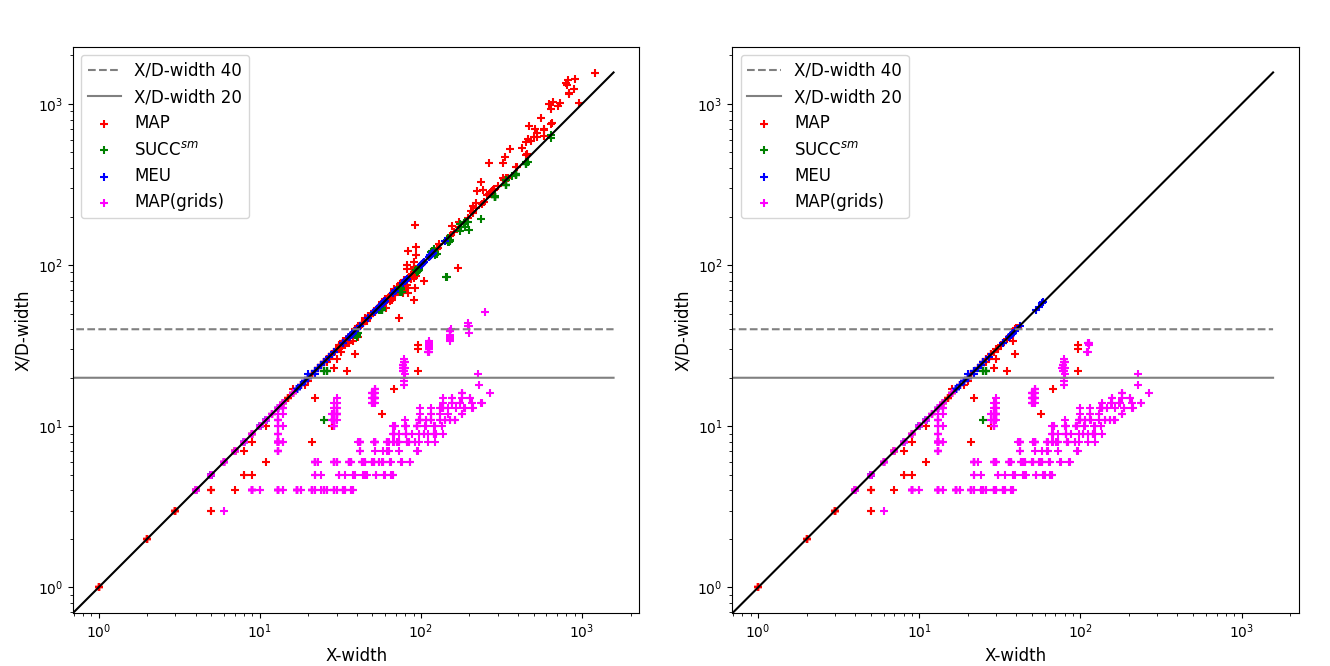}
    \caption{Q1: Comparison of tree decomposition width for $\mathbf{X}$-first and $\mathbf{X}/\mathbf{D}$-first variable order, across all \twoAMC instances (left) and for solved \twoAMC instances only (right). We solve $822$ of the $825$ instances with $\mathbf{X}/\mathbf{D}$-width at most $20$, $118$ of the $219$ instances with $\mathbf{X}/\mathbf{D}$-width between $21$ and $40$, and $7$ of the $353$ instances with $\mathbf{X}/\mathbf{D}$-width above $40$. 
    }
    \label{fig:widths}
\end{figure}
\begin{figure}
    \centering
    \includegraphics[width=0.49\textwidth]{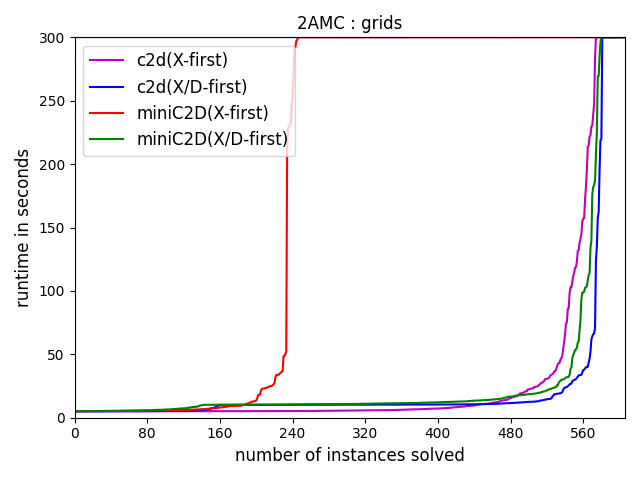}
    \includegraphics[width=0.49\textwidth]{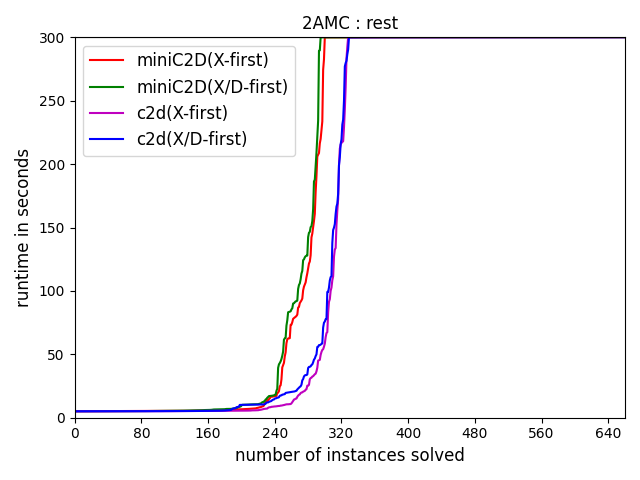}
    \caption{Q1: Running times per instance for different configurations on MAP grids (left) and all other \twoAMC instances (right).
    }
    \label{fig:eff_concom}
\end{figure}
In Figure~\ref{fig:eff_concom}, we plot the running times per instance for the different solvers. Given the width results, we distinguish between MAP grids and the remaining cases. 
On the grids, taking into account definedness results in clear performance gains when compiling SDDs (using miniC2D). On the other hand, compiling sd-DNNFs with c2d shows only a marginal difference between $\mathbf{X}$ and $\mathbf{X}/\mathbf{D}$ variable orders. The reason is that c2d implicitly exploits definedness even when given the $\mathbf{X}$-first order through its use of unit propagation. SDD compilation, on the other hand, cannot deviate from the given order, and only benefits from definedness if it is reflected in the variable order. On the other benchmarks, with fewer defined variables, the variable order has little effect within the same circuit class, but sd-DNNFs outperform SDDs, likely because unit propagation can also exploit context-dependent definedness. In the following we thus use c2d.


\paragraph{\textbf{Q2}:  How does aspmc compare to task-specific solvers from the PLP literature?}
We first consider the efficiency of the whole pipeline from instance to solution on the MAP and MEU tasks, which are addressed by both ProbLog and PITA. 
From the plots of running times in Figure~\ref{fig:map_eff_per_set}, we observe that all solvers outperform clingo's model enumeration. ProbLog is  slower than both  \aspmc and PITA except on the MAP graphs. Among \aspmc and PITA, \aspmc outperforms PITA on MEU, and vice versa on MAP. This can be explained by the different overall approach to knowledge compilation taken in the various solvers. \aspmc always compiles a CNF encoding of the whole ground program, including all ground atoms, ProbLog compiles a similar encoding, but restricted to the part of the formula that is relevant to the task at hand, while PITA directly compiles a circuit for the truth of atoms of interest in terms of the relevant ground choice atoms. As MAP queries are limited to probabilistic facts, this allows PITA to compile a circuit for the truth of the evidence in terms of relevant probabilistic facts only, which especially for the graph setting can be significantly smaller than a complete encoding. For MEU, PITA needs one circuit per atom with a utility, which additionally need to be combined, putting PITA at a disadvantage.

\begin{figure}
    \centering
    \includegraphics[width=0.49\textwidth]{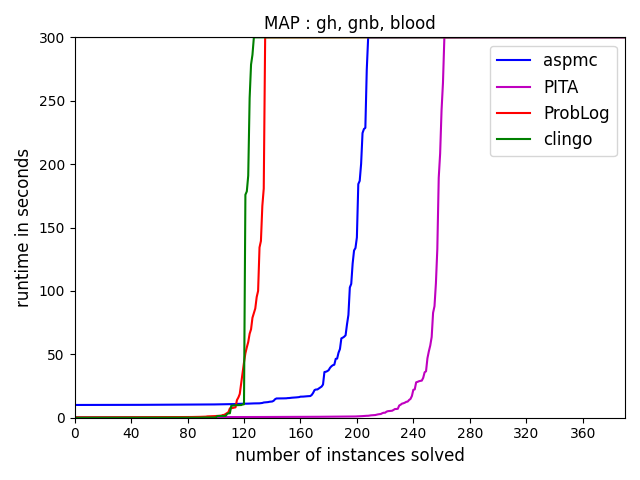}
    \includegraphics[width=0.49\textwidth]{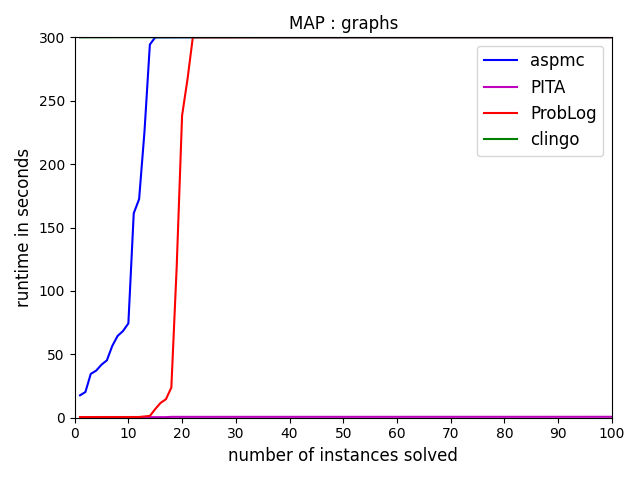}
    \includegraphics[width=0.49\textwidth]{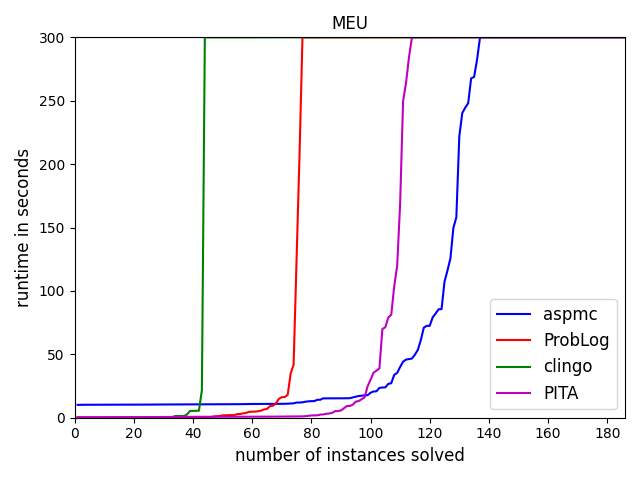}
    \includegraphics[width=0.49\textwidth]{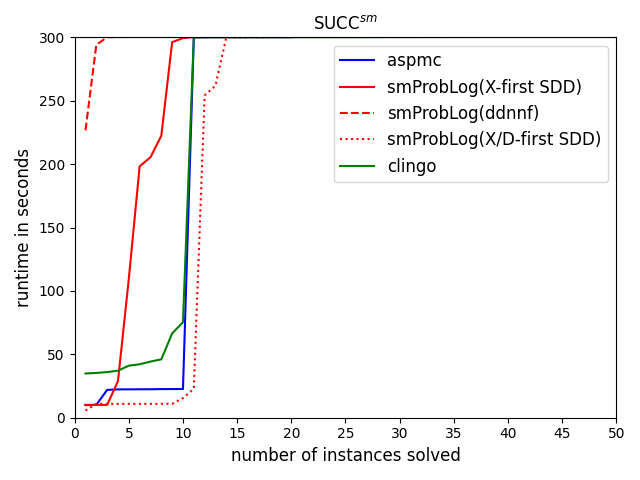}
    \caption{Q2: Running times of different solvers on MAP problem sets (top), indicated above each plot, MEU problems (bottom left) and SUCC\textsuperscript{SM} (bottom right).}
    \label{fig:map_eff_per_set}
\end{figure}

For SUCC\textsuperscript{sm}, we plot running times on the modified smokers setting for the clingo baseline, \aspmc and three variants of the dedicated ProbLog implementation, namely the original implementation of \cite{totis2021smproblog} that compiles to d-DNNF  as well as our modified implementation compiling to $\mathbf{X}$-first and $\mathbf{X}/\mathbf{D}$-first SDDs. These problems appear to be hard in general, but we observe a clear benefit from the constrained compilation enabled in our approach.

\begin{figure}
    \centering
    \includegraphics[width=0.49\textwidth]{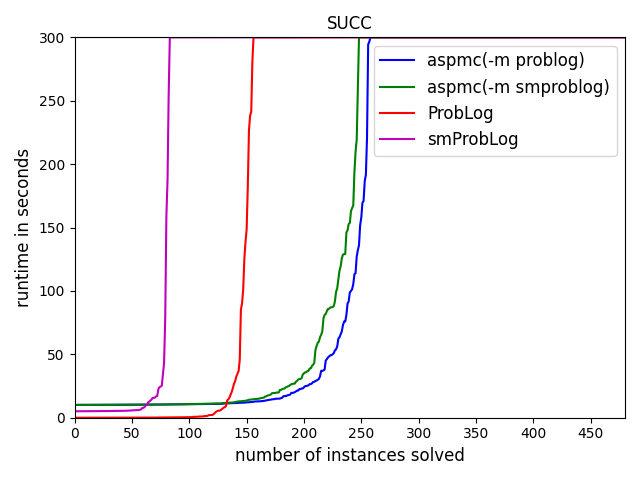}
    \caption{Q3: Running times of different solvers on SUCC programs.}
    \label{fig:meu_eff}
\end{figure}

\paragraph{\textbf{Q3}:   How does our second level approach compare to the first level approach when definedness reduces \twoAMC to AMC?}
On the regular smokers benchmark, \smp and \problog semantics coincide, i.e., all inner variables of the \twoAMC task are defined, and it thus reduces to AMC. In Figure~\ref{fig:meu_eff}, we plot running times for the AMC and \twoAMC variants of both \aspmc 
and \problog. For \problog, there is a clear gap between the two approaches, which is at least in part due to the fact that \problog only compiles the relevant part of the program, whereas \smp compiles the full theory. \aspmc outperforms \problog on the harder instances, with limited overhead for the second level task. 





\section{Conclusion}
\twoAMC is a hard problem, even harder than \#SAT or AMC in general, as it imposes significant constraints on variable orders in KC.
Our theoretical results show that these constraints can be weakened by exploiting definedness of variables.
In practice, this allow us to
(i)  introduce a strategy to construct variable orders for compilation into $\mathbf{X}/\mathbf{D}$-first sd-DNNFs with a-priori guarantees on complexity, and 
(ii)  to use unit propagation to decide literals earlier than specified by the variable order during compilation. Our experimental evaluation shows that (ii) generally improves the performance and (i) can boost it when many variables are defined. Furthermore, we see that compilation usually performs much better than an enumeration based approach to solve \twoAMC. Last but not least, our extensions of \aspmc and \smp are competitive with PITA and \problog, the state of the art solvers for MAP, MEU and SUCC\textsuperscript{sm} inference for logic programs, and even exhibit improved performance on MEU and SUCC\textsuperscript{sm}.

\paragraph{\textbf{Acknowledgements}}
This work has been supported by the Austrian Science Fund (FWF) Grant W1255-N23 and by the Fonds Wetenschappelijk Onderzoek (FWO) project N. G066818N.

\bibliographystyle{tlplike}
\bibliography{bib.bib}

\newpage
\appendix
\section{Full Proofs and Omitted Lemmas}
\label{sec:appendix}
\subsection{Full Proofs}
Here, we restate the Theorems and Lemmas from the main paper and give their full proofs.
\setcounter{theorem}{3}
\begin{theorem}[Tractable \twoAMC with $\mathbf{X}_O$-first sd-DNNFs]
Let $A = (\mathcal{T}, \mathbf{X}_I, \mathbf{X}_O, \alpha_I, \alpha_O, \mathcal{S}_I, \mathcal{S}_O, \transform)$ be a \twoAMC instance, where $\mathcal{T}$ is an $\mathbf{X}_O$-first sd-DNNF. Then, we can compute $\twoAMC(A)$ in linear time in the size of $\mathcal{T}$.
\end{theorem}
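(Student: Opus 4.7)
The plan is to give a single bottom-up traversal of $\mathcal{T}$ that associates to each node a semiring element (in $S_I$ or $S_O$, depending on the node's variable scope) and then verify correctness by structural induction. I would begin by classifying each node $n$ as \emph{inner-pure} if $\mathit{Vars}(n) \subseteq \mathbf{X}_I$, \emph{outer-pure} if $\mathit{Vars}(n) \subseteq \mathbf{X}_O$, or \emph{mixed} otherwise. At inner-pure nodes, I evaluate the standard AMC recursion in $\mathcal{S}_I$ under $\alpha_I$; at outer-pure nodes, the analogous recursion in $\mathcal{S}_O$ under $\alpha_O$. The interesting nodes are the mixed ones, and $\mathbf{X}_O$-firstness severely restricts their possible shapes.

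Next, I would show that any mixed node matches exactly one of three patterns: an or-node whose children are all mixed and share its variable set (by smoothness); an and-node of ``type (b)'' with exactly one mixed child and all other children outer-pure (by $\mathbf{X}_O$-firstness); or an and-node of ``type (a)'' with all children pure, which, since the node is itself mixed, must include at least one inner-pure and at least one outer-pure child. This gives a uniform evaluation rule: for or-nodes, sum the children in $\mathcal{S}_O$ via $\srsplus^O$; for type (b), multiply the mixed child's $S_O$ value with the $\alpha_O$-AMC values of the outer-pure children using $\srstimes^O$; for type (a), first aggregate the inner-pure children's values into a single $v_I \in S_I$ via $\srstimes^I$, apply $\transform$ once to obtain $\transform(v_I) \in S_O$, and then multiply in $S_O$ with the outer-pure children's values. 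Crucially, this avoids any homomorphism requirement on $\transform$, because the type-(a) boundary is the only place a conversion happens and the inner factors are collected into a single $S_I$-product before the single application of $\transform$.

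The correctness proof I have in mind is an induction with the invariant that at every mixed or outer-pure node $n$
\[
\mathrm{val}(n) = \srbplus^{O}_{\mathbf{x}_O \in \inter(\mathit{Vars}(n) \cap \mathbf{X}_O)} \Bigl(\srbtimes^{O}_{x \in \mathbf{x}_O} \alpha_O(x)\Bigr) \srstimes^{O} \transform\!\Bigl(\srbplus^{I}_{\mathbf{x}_I \in \mods(n \mid_{\mathbf{x}_O})} \srbtimes^{I}_{y \in \mathbf{x}_I} \alpha_I(y)\Bigr),
\]
and at every inner-pure node equals the standard AMC value of $n$ in $\mathcal{S}_I$. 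The step cases rely on (i) distributivity of $\srstimes$ over $\srsplus$ in both semirings to pull factors out of sums, (ii) decomposability to split $\mods(n\mid_{\mathbf{x}_O})$ into independent products over and-children, (iii) determinism plus smoothness to turn or-node sums into disjoint and exhaustive covers with a common variable scope, and (iv) the respect condition $\transform(e_{\srsplus^I}) = e_{\srsplus^O}$ for vacuous-sum corner cases. Evaluating the invariant at the root yields $\twoAMC(A)$, and since each node is processed with $O(1)$ semiring operations (and at most one call to $\transform$), the total time is linear in $|\mathcal{T}|$.

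I expect the main obstacle to be the justification of the type-(a) and-node step: I must verify that $\transform(\srbtimes^{I}_{k} v_k^I) \srstimes^{O} \srbtimes^{O}_{j} v_j^O$ coincides with the invariant's expression at the parent. This reduces, via decomposability, to showing that the inner sum over $\mods(n\mid_{\mathbf{x}_O})$ factorizes into independent products over each inner-pure child's scope, so its value equals $\srbtimes^{I}_{k} v_k^I$, which is exactly what a single application of $\transform$ at the boundary then lifts to $S_O$. Pinning down this boundary case is the heart of the argument; the other two mixed patterns follow routinely from the same distributive and decomposability facts.
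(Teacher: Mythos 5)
Your overall strategy --- a single bottom-up pass that evaluates inner-pure nodes in $\mathcal{S}_I$, everything else in $\mathcal{S}_O$, and applies $\transform$ exactly once at the and-nodes where inner-pure and outer-pure children meet --- is the same algorithm the paper uses, and your analysis of the two and-node patterns is correct and more explicit than the paper's. The gap is in the case you dismiss as routine: the mixed or-node. Writing $I_i(\mathbf{x}_O) = \srbplus^{I}_{\mathbf{x}_I \in \mods(n_i\mid_{\mathbf{x}_O})} \srbtimes^{I}_{y \in \mathbf{x}_I} \alpha_I(y)$ for the inner sum of child $n_i$, your rule $\mathrm{val}(\bigvee_i n_i) = \srbplus^{O}_i \mathrm{val}(n_i)$ together with your invariant requires, for every fixed $\mathbf{x}_O$,
\begin{equation*}
\srbplus^{O}_i \transform\bigl(I_i(\mathbf{x}_O)\bigr) \;=\; \transform\Bigl(\srbplus^{I}_i I_i(\mathbf{x}_O)\Bigr),
\end{equation*}
i.e.\ that $\transform$ commutes with addition across the or-children. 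Determinism and smoothness do not give you this: they make the \emph{full} model sets of the children disjoint, but two children can both remain satisfiable after conditioning on the same $\mathbf{x}_O$, and then the identity demands that $\transform$ be additive --- which it is not for the paper's motivating instances (for $\mathrm{SUCC}^{sm}$, $\transform((1,2)+(1,2)) = \nicefrac{2}{4} \neq \nicefrac{1}{2}+\nicefrac{1}{2}$; for MAP and MEU the outer addition is a maximum).

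What actually rescues this step is a structural property you never state or use: in an $\mathbf{X}_O$-first sd-DNNF every mixed or-node branches only on outer variables, so for each $\mathbf{x}_O$ at most one child satisfies $\mods(n_i\mid_{\mathbf{x}_O}) \neq \emptyset$, the remaining summands are $\transform(e_{\srsplus^{I}}) = e_{\srsplus^{O}}$, and the displayed identity holds trivially. This is exactly the content of the paper's remark that $\mathbf{X}$-first NNFs contain a node equivalent to $n\mid_{\mathbf{x}}$ for each $\mathbf{x} \in \inter(\mathbf{X})$, and it is what the paper's proof invokes when it asserts that each root-to-leaf ``subgraph'' contains a node $n'$ with $\mathit{Vars}(n') = \mathbf{X}_I$ equivalent to $\mathcal{T}\mid_{\mathbf{x}_O}$. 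You need to isolate and establish this property from the definition of $\mathbf{X}_O$-firstness (which takes some care, since the definition constrains only and-nodes) before your induction goes through; as written, the or-node case of your invariant is false for a general smooth, deterministic, decomposable circuit, and that --- not the type-(a) boundary --- is where the theorem's hypothesis does its real work.
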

\begin{proof}
The basic idea is as follows: We can see $\mathcal{T}$ as an algebraic circuit, by replacing or-nodes, and-nodes, false and true by sum, product, zero and one, respectively and by replacing all literals by their weight. Then we only need to make sure to use the sum, product, zero and one from the correct semiring: For pure nodes $n$ such that $\mathit{Vars}(n) \subseteq X_I$ this is the inner one for all other nodes it is the outer one. Additionally, for mixed nodes $n = n_1 \srstimes^O n_2$, where w.l.o.g. $\mathit{Vars}(n_1) \subseteq X_I$  we need to use $\transform(n_1) \srstimes^O n_2$ to have values that are over the same semiring.

Consider a subgraph $n$ of $\mathcal{T}$ with exactly one outgoing edge for each or-node and all outgoing edges for each and-node. As $\mathcal{T}$ satisfies $\mathbf{X}_O$-first and is smooth, there is a node $n'$ in $n$ such that $\mathit{Vars}(n') = X_I$, i.e., exactly the outer variables occur above $n'$ (see also the lowest and-nodes of the left NNF in Figure~\ref{fig:sddnnfs}).
Thus, $n'$ is equivalent to $\mathcal{T}\mid_{\mathbf{x}_O}$ for some assignment $\mathbf{x}_O$ to the outer variables, for which $n'$ computes the value of the inner AMC instance. 
As evaluation sums over all these subgraphs, it obtains the correct result.
\end{proof}

\setcounter{theorem}{8}
\begin{lemma}[Exponential Separation]
Let $\mathcal{T} = \bigwedge_{i = 1}^{n} X_i \leftrightarrow Y_i$, $\mathbf{X} = \{X_1, \dots, X_n\}$ and $\mathbf{D} = \{Y_1, \dots, Y_n\}$, then the size of the smallest $\mathbf{X}$-first sd-DNNF for $\mathcal{T}$ is exponential in $n$ and the size of the smallest $\mathbf{X}/\mathbf{D}$-first sd-DNNF for $\mathcal{T}$ is linear in $n$.
\end{lemma}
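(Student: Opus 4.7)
My plan handles the two bounds independently.

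For the linear upper bound, I would exhibit the explicit candidate $N = \bigwedge_{i=1}^{n}\bigl((X_i\wedge Y_i)\vee(\neg X_i\wedge\neg Y_i)\bigr)$, which plainly represents $\mathcal{T}$ and has size $\mathcal{O}(n)$. Decomposability holds because the top and-node ranges over disjoint variable sets $\{X_i,Y_i\}$; each level-two or-node combines two mutually contradictory conjunctions on the same pair, yielding both determinism and smoothness. To verify $\mathbf{X}/\mathbf{D}$-firstness, I would check the and-nodes in turn: the inner and-nodes $X_i\wedge Y_i$ and $\neg X_i\wedge\neg Y_i$ have only literal children and so satisfy the first disjunct of the definition trivially; the top and-node has the level-two or-nodes as its children, and each such or-node is the formula $X_i\leftrightarrow Y_i$, in which $Y_i$ is functionally determined by $X_i$, so its variables $\{X_i,Y_i\}$ sit inside $\mathbf{X}\cup\mathbf{D}(n_i,\mathbf{X})$ and it is pure modulo definability.

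For the exponential lower bound, I would invoke the remark stated just after the definition of $\mathbf{X}$-firstness: every $\mathbf{X}$-first NNF for $\mathcal{T}$ contains, for each $\mathbf{x}\in\inter(\mathbf{X})$, a node $n_\mathbf{x}$ equivalent to $\mathcal{T}\mid_\mathbf{x}$. Substituting $\mathbf{x}$ into the biconditionals gives $\mathcal{T}\mid_\mathbf{x}\equiv\bigwedge_{i:X_i\in\mathbf{x}}Y_i\wedge\bigwedge_{i:\neg X_i\in\mathbf{x}}\neg Y_i$, which encodes a single total assignment to $\mathbf{Y}$ and varies with $\mathbf{x}$. The $2^n$ formulas $\mathcal{T}\mid_\mathbf{x}$ are thus pairwise logically inequivalent, and since each node of the DAG represents a uniquely determined formula, we need at least $2^n$ distinct nodes, giving size $\geq 2^n$.

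The main subtlety, in my view, is that purity modulo definability is defined relative to the subformula at the node rather than relative to the full theory, so the global fact that $Y_i$ is defined by $X_i$ in $\mathcal{T}$ does not by itself give purity in each subnode. In the construction above this causes no trouble: each internal and-node either fixes both $X_i$ and $Y_i$ outright or has as its children complete biconditionals, and each or-node is itself the biconditional $X_i\leftrightarrow Y_i$, so definability transfers from $\mathcal{T}$ to every relevant subformula. I would spell out this transfer explicitly in the write-up.
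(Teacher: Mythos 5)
Your proof is correct and follows essentially the same route as the paper's: the lower bound via the $2^n$ pairwise-inequivalent residual theories $\mathcal{T}\mid_{\mathbf{x}}$, each of which must appear as a node of an $\mathbf{X}$-first sd-DNNF, and the upper bound via a linear-size sd-DNNF all of whose nodes are pure modulo definability. Your explicit per-node verification that $Y_i\in\mathbf{D}(n_i,\mathbf{X})$ (rather than only appealing to definability with respect to the full theory $\mathcal{T}$) is in fact slightly more careful than the paper's own argument, which asserts that every node over $\mathbf{X}\cup\mathbf{D}$ is pure without this check.
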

\begin{proof}
Let $\mathcal{C}$ be a $\mathbf{X}$-first sd-DNNF for $\mathcal{T}$. Recall that this implies that for every assignment $\mathbf{x}$ to $\mathbf{X}$ there exists a node $n_{\mathbf{x}}$ in $\mathcal{C}$ such that $n_{\mathbf{x}}$ is equivalent to $\mathcal{T}\mid_{\mathbf{x}}$. Since $\mathcal{T}\mid_{\mathbf{x}}$ and $\mathcal{T}\mid_{\mathbf{x}'}$ are different, whenever $\mathbf{x}$ and $\mathbf{x}'$ are different there are $2^{n}$ different residual theories $\mathcal{T}\mid_{\mathbf{x}}$, one for each $\mathbf{x} \in \inter(\mathbf{X})$. This implies that $\mathcal{C}$ has at least $2^{n}$ different nodes, which proves the first part of the lemma.

We still need to prove that there exists a $\mathbf{X}/\mathbf{D}$-first sd-DNNF $\mathcal{S}$ for $\mathcal{T}$ the size of which is linear in $n$. For this we first consider, what $\mathbf{X}/\mathbf{D}$-firstness means for $\mathcal{T}$. The variables that are defined by $\mathbf{X}$ in terms of $\mathcal{T}$ are $\mathbf{D} = \{Y_1, \dots, Y_n\}$. Thus, any node over variables $\mathbf{X}\cup\mathbf{D}$ is a pure node and every NNF over these variables is an $\mathbf{X}/\mathbf{D}$-first NNF. This means that we can use any sd-DNNF $\mathcal{S}$ for $\mathcal{T}$. It is easy to see, that when one decides variables according to the variable order $X_1, Y_1, \dots, X_n, Y_n$ the resulting NNF is of size linear in $n$.
\end{proof}

\setcounter{theorem}{11}
\begin{theorem}[Tractable AMC with $\mathbf{X}/\mathbf{D}$-first sd-DNNFs]
The value of a \twoAMC instance $A = (\mathcal{T}, \mathbf{X}_I, \mathbf{X}_O, \alpha_I, \alpha_O, \mathcal{S}_I, \mathcal{S}_O, \transform)$ can be computed in polynomial time under the following conditions:
\begin{itemize}
    \item $\mathcal{T}$ is an $\mathbf{X}/\mathbf{D}$-first sd-DNNF
    \item $\transform$ is a monoid homomorphism from the monoid  $\mathcal{M} = \langle O(A) \rangle_{(R_I, \srstimes^{I}, e_{\srstimes^{I}})}$ generated by the observable values to $(R_O, \srstimes^{O}, e_{\srstimes^{O}})$.
\end{itemize} 
\end{theorem}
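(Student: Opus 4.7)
My plan is to evaluate the $\mathbf{X}/\mathbf{D}$-first sd-DNNF in a single bottom-up pass, classifying each internal node $n$ as either \emph{inner-typed} (pure with $\mathit{Vars}(n) \subseteq \mathbf{X}_I \setminus \mathbf{D}(n, \mathbf{X}_O)$), \emph{outer-typed pure} (pure modulo definability with $\mathit{Vars}(n) \subseteq \mathbf{X}_O \cup \mathbf{D}(n, \mathbf{X}_O)$), or \emph{outer-typed mixed}. Inner-typed nodes carry a value in $\mathcal{S}_I$ computed by the usual AMC recursion; outer-typed nodes carry a value in $\mathcal{S}_O$, using $\alpha_O$ at outer leaves, $\transform(\alpha_I(\cdot))$ at leaves from $\mathbf{D}(n,\mathbf{X}_O)$, and the outer semiring operations internally. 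The $\mathbf{X}/\mathbf{D}$-firstness property guarantees that every mixed and-node has at most one mixed child, with all other children pure modulo definability over $\mathbf{X}_O \cup \mathbf{D}(n, \mathbf{X}_O)$, so the outer product at such a node is always well-typed.

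The invariant I aim to prove by structural induction is that for every outer-typed node $n$ the returned value equals
\[
\srbplus^{O}_{\mathbf{x}_O \in \inter(\mathbf{X}_O \cap \mathit{Vars}(n))} \srbtimes^{O}_{x \in \mathbf{x}_O} \alpha_O(x) \;\srstimes^{O}\; \transform\!\left(\srbplus^{I}_{\mathbf{x}_I \in \mods(n \mid_{\mathbf{x}_O})} \srbtimes^{I}_{y \in \mathbf{x}_I} \alpha_I(y)\right),
\]
which at the root of $\mathcal{T}$ coincides with $\twoAMC(A)$. Decomposability, determinism, and smoothness give the standard factorisation at and- and or-nodes. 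The substantive step occurs at outer-typed pure nodes: each outer assignment $\mathbf{x}_O$ uniquely forces the values of the variables in $\mathbf{D}(n, \mathbf{X}_O)$, so the inner sum collapses to the inner product of their labels times an inner AMC over the remaining (non-defined) inner variables. Applying the homomorphism to this inner product converts it into a product of $\transform(\alpha_I(l))$ factors in $\mathcal{S}_O$, which is precisely what the recursion assembles.

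The main obstacle I foresee is keeping every invocation of the homomorphism inside the submonoid generated by the observable values $O(A)$ of Definition~\ref{def:obs}, since $\transform$ is only assumed to be a homomorphism there. I would argue that every $\alpha_I(l)$ pushed through $\transform$ comes from a literal of a variable that is defined by $\mathbf{X}_O$ in the conditioned theory, and therefore lies in the first subset of $O(A)$; each residual inner sum after conditioning on $\mathbf{x}_O$ extended by any subset of its defined literals lies in the second subset; and closure under $\srstimes^{I}$ inside the generated monoid absorbs the intermediate products formed during the recursion. A secondary subtlety is that $\mathbf{D}(n,\mathbf{X}_O)$ is computed \emph{locally} and may strictly grow as one descends, so the inductive hypothesis must be phrased relative to the formula represented at $n$ rather than to the global theory, mirroring the quantification over partial assignments already baked into $\mathbf{X}/\mathbf{D}$-firstness and into $O(A)$. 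Once the invariant is in place, the polynomial-time bound is immediate: each node is visited once and performs a constant number of semiring operations.
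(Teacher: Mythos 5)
Your proposal is correct in its essentials but takes a genuinely different route from the paper. The paper argues at the level of the \twoAMC value expression rather than the circuit: it picks a variable $d$ defined by $\mathbf{X}_O$, uses distributivity to factor $\alpha_I(d\mid_{\mathbf{x}_O})$ out of the inner sum, pushes it through $\transform$ via the homomorphism, and re-absorbs it into the outer sum, thereby exhibiting an equivalent \twoAMC instance $B$ in which $d$ has migrated to $\mathbf{X}_O$ with label $\transform(\alpha_I(\cdot))$; iterating this (and recursing on conditioned instances to handle locally defined variables) reduces everything to the $\mathbf{X}_O$-first case already settled by Theorem~\ref{thm:tract1}. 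You instead inline everything into a single structural induction on the circuit, with a typed bottom-up evaluation and an explicit per-node invariant. The paper's reduction buys modularity -- it never re-proves circuit-evaluation correctness, and $\transform$ is only ever applied to full residual inner sums and to single defined-literal labels, i.e., to values literally enumerated in $O(A)$. Your version buys a direct description of the algorithm one would implement and makes the roles of decomposability, determinism and smoothness explicit.

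One step in your scheme needs more care than you give it. When an and-node has several fully inner children (the residual inner theory decomposes into independent components), your evaluation applies $\transform$ to each component's inner AMC value separately and multiplies the results in $\mathcal{S}_O$. These component values are \emph{factors} of an element of the second subset of $O(A)$, not elements of $O(A)$ nor of the monoid it generates (generation closes under products, not under factorization), so the homomorphism hypothesis does not immediately license commuting $\transform$ with that product. You would either have to apply $\transform$ exactly once, at a single node whose variable set is the whole residual inner variable set (which is what the paper's proof of Theorem~\ref{thm:tract1} implicitly relies on via smoothness), or strengthen the invariant so that such factors are themselves shown to lie in the generated monoid. Your handling of the local growth of $\mathbf{D}(n,\mathbf{X}_O)$ and of which values land in which subset of $O(A)$ is otherwise sound and matches the paper's (equally sketch-level) treatment.
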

\begin{proof}[Proof (Sketch).]
Let $n$ be the root of $\mathcal{T}$.
We know that the variables in $\mathbf{D}(n, \mathbf{X}_O)$ are defined by $\mathbf{X}_O$ in terms of $T$. For $d \in \mathbf{D}(n, \mathbf{X}_O)$ and $\mathbf{x}_O \in \inter(\mathbf{X}_O)$, we denote by $d\mid_{\mathbf{x}_O}$ the literal of $d$ that must be included in $\mathbf{x}_I$ in order for $\mathbf{x}_I \cup \mathbf{x}_O$ to be a satisfying assignment (if $\mathbf{x}_O$ can be extended to a satisfying assignment, otherwise choose an arbitrary but fixed value).

Recall that the value of $A$ is defined as
\begin{align*}
    \srbplus^{O}_{\mathbf{x}_O \in \inter(\mathbf{X}_O)} \srbtimes^{O}_{x \in \mathbf{x}_O} \alpha_O(x) \srstimes^{O} \transform\left(\srbplus^{I}_{\mathbf{x}_I \in \inter(\mathbf{X}_I), \mathbf{x}_I\cup\mathbf{x}_O \models T} \srbtimes^{I}_{y \in \mathbf{x}_I} \alpha_I(y)\right).
\end{align*}
Since the inner sum only takes interpretations that satisfy $T$, we do not need to take the sum over both values of a defined variable $d$ but can restrict ourselves to the value $d\mid_{\mathbf{x}_O}$ determined by $\mathbf{x}_O$.
\begin{align*}
    &\srbplus^{I}_{\mathbf{x}_I \in \inter(\mathbf{X}_I), \mathbf{x}_I\cup\mathbf{x}_O \models T} \srbtimes^{I}_{y \in \mathbf{x}_I} \alpha_I(y) \\
    = &\srbplus^{I}_{\mathbf{x}_I \in \inter(\mathbf{X}_I \setminus \{d\}), \mathbf{x}_I\cup\mathbf{x}_O \models T} \srstimes^{I} \alpha_I(d\mid_{\mathbf{x}_O}) \srstimes^{I} \srbtimes^{I}_{y \in \mathbf{x}_I} \alpha_I(y) \\
    = &\alpha_I(d\mid_{\mathbf{x}_O}) \srstimes^{I} \srbplus^{I}_{\mathbf{x}_I \in \inter(\mathbf{X}_I \setminus \{d\}), \mathbf{x}_I\cup\mathbf{x}_O \models T} \srbtimes^{I}_{y \in \mathbf{x}_I} \alpha_I(y) 
\end{align*}
Next, we can plug these equalities into the expression for the value of $A$ and use that $\transform$ is a homomorphism.
\begin{align*}
    &\srbplus^{O}_{\mathbf{x}_O \in \inter(\mathbf{X}_O)} \srbtimes^{O}_{x \in \mathbf{x}_O} \alpha_O(x) \srstimes^{O} \transform\left(\srbplus^{I}_{\mathbf{x}_I \in \inter(\mathbf{X}_I), \mathbf{x}_I\cup\mathbf{x}_O \models T} \srbtimes^{I}_{y \in \mathbf{x}_I} \alpha_I(y)\right)\\
    =&\srbplus^{O}_{\mathbf{x}_O \in \inter(\mathbf{X}_O)} \srbtimes^{O}_{x \in \mathbf{x}_O} \alpha_O(x) \srstimes^{O} \transform\left(\alpha_I(d\mid_{\mathbf{x}_O}) \srstimes^{I} \srbplus^{I}_{\mathbf{x}_I \in \inter(\mathbf{X}_I \setminus \{d\}), \mathbf{x}_I\cup\mathbf{x}_O \models T} \srbtimes^{I}_{y \in \mathbf{x}_I} \alpha_I(y) \right)\\
    =&\srbplus^{O}_{\mathbf{x}_O \in \inter(\mathbf{X}_O)} \srbtimes^{O}_{x \in \mathbf{x}_O} \alpha_O(x) \srstimes^{O} \transform( \alpha_I(d\mid_{\mathbf{x}_O})) \srstimes^{O} \transform\left(\srbplus^{I}_{\mathbf{x}_I \in \inter(\mathbf{X}_I \setminus \{d\}), \mathbf{x}_I\cup\mathbf{x}_O \models T} \srbtimes^{I}_{y \in \mathbf{x}_I} \alpha_I(y) \right)
\end{align*}
Due to fact that $\transform$ satisfies $\transform(e_{\srstimes^{I}}) = e_{\srstimes^{O}}, \transform(e_{\srsplus^{I}}) = e_{\srsplus^{O}}$, whenever the assignment $\mathbf{x}_O$ cannot be extended to a satisfying of $T$, the weight for the given assignment will be $e_{\srsplus^{O}}$. 

Thus, we can again use the fact that the variable $d$ is defined and sum over both of its values \emph{in the outer sum}, resulting in
\begin{align*}
    &\srbplus^{O}_{(\mathbf{x}_O,\mathbf{d}) \in \inter(\mathbf{X}_O \cup \{d\}} \srbtimes^{O}_{x \in \mathbf{x}_O} \alpha_O(x) \srstimes^{O} \transform\left(\srbtimes^{I}_{d \in \mathbf{d}} \alpha_I(d)\right) \srstimes^{O} \transform\left(\srbplus^{I}_{\mathbf{x}_I \in \inter(\mathbf{X}_I \setminus \{d\}), \mathbf{x}_I\cup\mathbf{x}_O \models T} \srbtimes^{I}_{y \in \mathbf{x}_I} \alpha_I(y) \right)\\
    = &\srbplus^{O}_{(\mathbf{x}_O,\mathbf{d}) \in \inter(\mathbf{X}_O \cup \{d\})} \srbtimes^{O}_{x \in \mathbf{x}_O} \alpha_O(x) \srstimes^{O} \srbtimes^{O}_{d \in \mathbf{d}} \transform(\alpha_I(d)) \srstimes^{O} \transform\left(\srbplus^{I}_{\mathbf{x}_I \in \inter(\mathbf{X}_I \setminus \{d\}), \mathbf{x}_I\cup\mathbf{x}_O \models T} \srbtimes^{I}_{y \in \mathbf{x}_I} \alpha_I(y) \right)
\end{align*}
We observe that this expression is equal to the value of another \twoAMC instance $B = (T, \mathbf{X}_I\setminus \{d\}, \mathbf{X}_O\cup \{d\}, \beta_{I},  \beta_{O}, \mathcal{R}_I, \mathcal{R}_O)$, where 
\begin{align*}
    \beta_{I}(x) &= \alpha_I(x) \quad \text{ for } x \in \lit(\mathbf{X}_I\setminus \{d\})\\
    \beta_{O}(x) &= \left\{\begin{array}{cc}
        \alpha_O(x) & \text{ if } x \in \lit(\mathbf{X}_O), \\
        \transform(\alpha_I(x)) & \text{ if } x \in \lit(\{d\}).
    \end{array}\right..
\end{align*}
Now, if $d$ occurs before some variable $x \in \mathbf{X}$ in some node $n$ of $\mathcal{T}$, then $n$ is not an $\mathbf{X}$-first NNF. However, it is an $\mathbf{X}_O\cup \{d\}$-first sd-DNNF. On this NNF we can solve the \twoAMC-instance $B$ in polynomial time according to Theorem~\ref{thm:tract1}.
By induction on the number of variables that occur before $x$ the claim follows.
\end{proof}

\setcounter{theorem}{13}
\begin{lemma}
Let $\mathcal{T}$ be a CNF over variables $\mathbf{Y}$ and $(T,\chi)$ a TD of $\PRIM(\mathcal{T})$ of width $k$. Furthermore, let $\mathbf{X} \subseteq \mathbf{Y}$ and $\mathbf{D} = \mathbf{D}(\mathcal{T}, \mathbf{X})$. If there exists $t^* \in V(T)$ such that (1) $\chi(t^*) \subseteq \mathbf{X} \cup \mathbf{D}$ and (2) $\chi(t^*)$ is a \emph{separator} of $\mathbf{X}$ and $\mathbf{Y}\setminus (\mathbf{X}\cup\mathbf{D})$, i.e., every path from $\mathbf{X}$ to $\mathbf{Y}\setminus (\mathbf{X}\cup\mathbf{D})$ in $\PRIM(\mathcal{T})$ uses a vertex from $\chi(t^*)$, then we can compile $\mathcal{T}$ into an $\mathbf{X}/\mathbf{D}$-first sd-DNNF in time $\mathcal{O}(2^{k}\cdot \text{poly}(|\mathcal{T}|))$. 
\end{lemma}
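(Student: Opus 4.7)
My plan is to combine the existing compilation bound of \cite{korhonen2021integrating} with a careful structural analysis of the resulting circuit. The performance guarantee of $\mathcal{O}(2^{k}\cdot\text{poly}(|\mathcal{T}|))$ is already provided by their construction when the variable order follows a tree decomposition top-down; I would invoke this as a black box. The nontrivial work is to show that, when we root $(T,\chi)$ at $t^*$ before running their procedure, the resulting sd-DNNF satisfies $\mathbf{X}/\mathbf{D}$-firstness. So first, I would root $T$ at $t^*$ and obtain the variable order in which each variable is decided at the deepest bag in which it appears, processed from the root downwards; this is the order used by the Korhonen--J\"arvisalo compilation.

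The central structural observation I would prove is that, after conditioning $\mathcal{T}$ on any complete assignment to $\chi(t^*)$, the primal graph of the residual CNF decomposes into connected components each of which is either entirely contained in $\mathbf{X}\cup\mathbf{D}$ or disjoint from $\mathbf{X}$. This is immediate from condition (2): any path in $\PRIM(\mathcal{T})$ between an $\mathbf{X}$-vertex and a vertex in $\mathbf{Y}\setminus(\mathbf{X}\cup\mathbf{D})$ must cross $\chi(t^*)$, so once those vertices are removed the two sets end up in different components. Because the TD is rooted at $t^*$ and every variable appears in a connected subtree of bags, each child subtree of $t^*$ inherits variables from exactly one side of this separator. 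Hence the subcircuits compiled from each such subtree use variables either contained in $\mathbf{X}\cup\mathbf{D}$ (Type A) or disjoint from $\mathbf{X}$ (Type B), and are therefore pure modulo definability, since $\mathbf{D}\cap \mathit{Vars}(n)\subseteq \mathbf{D}(n,\mathbf{X})$ for any $n$ obtained by conditioning (definability is preserved under further conditioning).

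With this in hand, I would walk through the two kinds of and-nodes that appear. And-nodes inside the ``backbone'' of decisions on $\chi(t^*)$ conjoin a literal on a variable in $\chi(t^*)\subseteq\mathbf{X}\cup\mathbf{D}$ with the subcircuit corresponding to the already-chosen partial assignment; the literal child is pure modulo definability with variables in $\mathbf{X}\cup\mathbf{D}(n_i,\mathbf{X})$, matching the second clause of the $\mathbf{X}/\mathbf{D}$-first definition. And-nodes at the point where the children subtrees of $t^*$ are combined, under a full assignment to $\chi(t^*)$, conjoin Type A and Type B pure subcircuits, which matches the first clause. This covers every and-node in the produced sd-DNNF, so the circuit is $\mathbf{X}/\mathbf{D}$-first.

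The main obstacle I anticipate is justifying that the Korhonen--J\"arvisalo construction, when the bag order is fixed by rooting at $t^*$, actually yields the component-respecting circuit topology sketched above, i.e., that their dynamic program processes the two sides of the separator as independent subproblems once $\chi(t^*)$ has been decided. This requires tracing through their algorithm to confirm that the compiled circuit for a bag is the conjunction, over child bags, of independently compiled subcircuits, which is what makes the decomposition argument go through; once that is in place, the rest reduces to the straightforward definability bookkeeping outlined above.
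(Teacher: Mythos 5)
Your proof takes essentially the same route as the paper's: invoke the Korhonen--J\"arvisalo result for the $\mathcal{O}(2^{k}\cdot\text{poly}(|\mathcal{T}|))$ bound, root the tree decomposition at $t^*$ so that the separator bag $\chi(t^*)\subseteq\mathbf{X}\cup\mathbf{D}$ is decided first, and use condition (2) to conclude that the residual CNF decomposes into components each contained in $\mathbf{X}\cup\mathbf{D}$ or disjoint from $\mathbf{X}$, whose compilations are pure modulo definability. Your additional case analysis of the and-nodes (and the caveat about tracing the dynamic program's circuit topology) merely fleshes out details the paper's sketch leaves implicit; the argument is correct.
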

\begin{proof}
The performance guarantee is due to~\cite{korhonen2021integrating} and holds when we decide the variables in the order they occur in the TD starting from the root.
$\mathbf{X}/\mathbf{D}$-firstness can be guaranteed by taking $t^*$ as the root of the TD and, thus, first deciding all variables in $\chi(t^*) = \{S_1, \dots, S_n\}$. 
From condition (2) it follows that afterwards the CNF has decomposed into separate components, which either only use variables from $\mathbf{X} \cup \mathbf{D}$ or use no variables from $\mathbf{X}$. Thus, their compilation only leads to pure NNFs.

The variable order can be inspected schematically in Figure~\ref{fig:xdvtree}. Here, $v_{I}$ is the remaining variable order for the inner variables and $v_{O}$ is the remaining variable order for the outer (and possibly defined) variables. The split signifies that the CNF composes into different components and we can consider respective variable orders for them independently. 
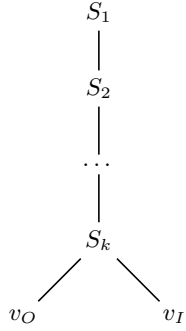
\begin{figure}
    \centering
    \begin{tikzpicture}
        \node (v0) at (0, 1) {$S_{1}$};
        \node (v1) at (0, 0) {$S_{2}$};
        \node (v2) at (0, -1) {$\dots$};
        \node (v3) at (0, -2) {$S_{k}$};
        \node (v4) at (-1, -3) {$v_{O}$};
        \node (v5) at (1, -3) {$v_{I}$};
        \draw[-,semithick] (v0) to (v1);
        \draw[-,semithick] (v1) to (v2);
        \draw[-,semithick] (v2) to (v3);
        \draw[-,semithick] (v3) to (v4);
        \draw[-,semithick] (v3) to (v5);
    \end{tikzpicture}
    \caption{Schematic variable order constructed in the proof of Lemma~\ref{lem:XD-tree}.}
    \label{fig:xdvtree}
\end{figure}
\end{proof}

\subsection{Omitted Lemmas Showing Homomorphism Property}
Here, we give proofs for the fact that the transformation functions of the different \twoAMC problems are homomorphisms.
\renewcommand\thetheorem{\roman{theorem}}
\setcounter{theorem}{0}
\begin{lemma}
The function
\[
    h: \mathbb{N}^2 \rightarrow [0,1], (n_1, n_2) \mapsto \left\{\begin{array}{cc}
        0 & \text{ if } n_2 = 0,\\
        \nicefrac{n_1}{n_2} & \text{ otherwise.}
    \end{array}\right. 
\] 
is a monoid homomorphism from $(\{(n_1, n_2) \mid n_1 \leq n_2, n_1, n_2 \in \mathbb{N}\}, \cdot, (1,1))$, where multiplication is coordinatewise, to $([0,1], \cdot, 1)$. 
\end{lemma}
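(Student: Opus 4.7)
The plan is to directly verify the two defining conditions of a monoid homomorphism: preservation of the identity and preservation of the binary operation. First I would check that the codomain restriction is sensible, noting that for any $(n_1,n_2)$ with $n_1 \le n_2$, either $n_2 = 0$ (which forces $n_1 = 0$ and so $h(n_1,n_2) = 0 \in [0,1]$) or $n_2 > 0$ and $n_1/n_2 \in [0,1]$. I would also briefly note that the domain is indeed a submonoid of $(\mathbb{N}^2, \cdot, (1,1))$ since $a_1 \le a_2$ and $b_1 \le b_2$ imply $a_1 b_1 \le a_2 b_2$, and $(1,1)$ lies in the domain.

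Next, identity preservation is immediate: $h(1,1) = 1/1 = 1$. For multiplicativity, given $(a_1,a_2), (b_1,b_2)$ in the domain, I would split into cases based on whether $a_2$ or $b_2$ is zero. If $a_2 = 0$ then $a_1 = 0$, so $(a_1 b_1, a_2 b_2) = (0,0)$ giving $h((a_1,a_2)\cdot(b_1,b_2)) = 0$; simultaneously $h(a_1,a_2) = 0$, so the product $h(a_1,a_2) \cdot h(b_1,b_2) = 0$ as well. The case $b_2 = 0$ is symmetric. In the remaining case where $a_2, b_2 > 0$, both sides evaluate to $(a_1 b_1)/(a_2 b_2) = (a_1/a_2)(b_1/b_2)$ by the standard arithmetic of fractions.

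The only subtlety, and what I would call the main obstacle, is the treatment of the boundary case $n_2 = 0$, where the ``otherwise'' branch of $h$ would involve division by zero; this is exactly why the piecewise definition exists, and the constraint $n_1 \le n_2$ in the domain ensures this branch is only triggered at $(0,0)$, making the verification in that case trivial. All together, these observations establish both homomorphism conditions, completing the proof.
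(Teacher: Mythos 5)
Your proof is correct and follows essentially the same direct verification as the paper: check preservation of the identity, then handle multiplicativity by separating the degenerate zero case from the ordinary fraction arithmetic (the paper cases on the first components being zero, you on the second, but the constraint $n_1 \le n_2$ makes these equivalent). Your added remarks that the domain is a submonoid and that $h$ lands in $[0,1]$ are harmless extras the paper omits.
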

Note that if $n_2$ is zero then also $n_1$ is zero, so no other division by zero is avoided by $h$.
\begin{proof}
It is clear that $h((1,1)) = 1$, therefore the neutral element is preserved. Furthermore, let $(n_1, n_2), (n_3, n_4) \in \mathbb{N}^2$. First assume, that one of $n_1$ and $n_3$ is zero. In this case, 
\[
    h((n_1, n_2) \cdot (n_3, n_4)) = h((0, n_2n_4)) = 0 = h((n_1,n_2))h((n_3,n_4)), 
\]
since one of $h((n_1,n_2)$ and $h((n_3,n_4))$ is zero. Otherwise, both $n_1$ and $n_3$ and therefore also $n_2$ and $n_4$ are unequal to zero.
Then
\[
    h((n_1, n_2) \cdot (n_3, n_4)) = h((n_1n_3, n_2n_4)) = \frac{n_1n_3}{n_2n_4} = \frac{n_1}{n_2}\frac{n_3}{n_4} = h((n_1,n_2))h((n_3,n_4)).
\]
\end{proof}

\begin{lemma}
Let $A$ be a \twoAMC instance corresponding to the evaluation problem of an \dtp program. Then 
\[
    h : \{(p, pu) \mid p \in [0,1], u \in \mathbb{R}\} \rightarrow \mathbb{R}\cup\{-\infty\}, (p, pu) \mapsto \left\{\begin{array}{cc}
        -\infty & \text{ if } p = 0 \\
        pu & \text{ otherwise.}
    \end{array}\right..
\]
is a monoid homomorphism from the monoid generated by the observable values to $(\mathbb{R}\cup\{-\infty\}, +, 0)$.
\end{lemma}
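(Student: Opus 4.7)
My plan is to show that every element of the generated monoid $\mathcal{M}=\langle O(A)\rangle$ has the special form $(p,pu)$ with $p\in\{0,1\}$, which reduces the lemma to the short case check already hinted at in the paper: $h$ is a monoid homomorphism precisely on tuples of this restricted form.

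First, I would classify the two kinds of elements of $O(A)$ from Definition~\ref{def:obs} under the \dtp encoding, where $\mathbf{X}_O=\mathbf{D}$ and $\mathbf{X}_I=\mathbf{H}\setminus\mathbf{D}$ contains the probabilistic facts $\mathbf{F}$ together with the derived atoms. For the first kind, $\alpha_I(l)$ with $l$ a literal of a variable defined by $\mathbf{X}_O$ in $\mathcal{T}\cup\mathbf{x}_O$, I would argue that no probabilistic fact can be defined by the decisions: since decisions and probabilistic facts are independent and every joint assignment extends to a unique model, both literals of any probabilistic fact remain consistent with any fixed decision assignment, contradicting definability. Hence the defined variable must be a derived atom, whose $\alpha_I$-label is $(1,u(l))$, of the required form with $p=1$.

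For the second kind, an inner sum $\srbplus^{I}_{\mathbf{x}_I\in\mods(\mathcal{T}\mid_{\mathbf{x}_O})}\srbtimes^{I}_{y\in\mathbf{x}_I}\alpha_I(y)$ with $\mathbf{x}_O$ assigning $\mathbf{X}_O\cup\mathbf{D}^*$ for a defined set $\mathbf{D}^*$, I would split on whether $\mathbf{x}_O$ is satisfiable. If not, the sum is the empty sum $(0,0)$. If it is, then the values $\mathbf{x}_O$ assigns to $\mathbf{D}^*$ are precisely those forced by $\mathbf{x}_O\mid_{\mathbf{X}_O}$, so the conditioning on $\mathbf{D}^*$ is vacuous and the sum ranges over all probabilistic worlds $\mathbf{f}$, each extending uniquely to a model. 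Using the telescoping identity $\bigotimes_i(a_i,a_iu_i)=(\prod_i a_i,\,(\prod_i a_i)\sum_i u_i)$ in the EU semiring, each world contributes $(\Pr(\mathbf{f}),\Pr(\mathbf{f})\,U(\mathbf{f}))$, and coordinate-wise summation yields $(1,\mathbb{E}[U])$ because $\sum_{\mathbf{f}}\Pr(\mathbf{f})=1$. Either way the first coordinate is in $\{0,1\}$. Having established $O(A)\subseteq\{(p,pu)\mid p\in\{0,1\},\,u\in\mathbb{R}\}$, I would note that $(p_1,p_1u_1)\otimes(p_2,p_2u_2)=(p_1p_2,\,p_1p_2(u_1+u_2))$ preserves this form since $\{0,1\}$ is closed under multiplication, so $\mathcal{M}$ is contained in the same set. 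Checking the homomorphism conditions on $\mathcal{M}$ is then routine: $h((1,0))=0$ preserves the identity, while $h(x\otimes y)$ equals $u_1+u_2=h(x)+h(y)$ when both $p_i$ are $1$, and both sides collapse to $-\infty$ as soon as some $p_i$ is $0$.

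The main obstacle I anticipate is the second step---justifying that the first coordinate of the inner sum lands in $\{0,1\}$. The delicate part is that conditioning on the defined atoms $\mathbf{D}^*$ could a priori cut away a proper subset of probabilistic worlds and yield an intermediate probability mass; ruling this out requires invoking the \emph{independence plus unique-model} property of \dtp in an essential way, and is the only place where a global structural assumption of the encoding, rather than pure semiring algebra, enters the argument.
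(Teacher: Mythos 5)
Your proposal is correct and follows essentially the same route as the paper's proof: use the independence and unique-model-extension property of \dtp to show that every observable value (defined-literal label or inner sum) has first coordinate in $\{0,1\}$, conclude the generated monoid lies in $\{(1,u)\}\cup\{(0,0)\}$, and then verify the homomorphism by the same two-case check. Your version is somewhat more explicit than the paper's (e.g., arguing directly that probabilistic facts cannot be defined by decisions, and that conditioning on $\mathbf{D}^*$ is vacuous or yields the empty sum), but these are elaborations of the same argument rather than a different approach.
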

While $h$ is not a homomorphism on the whole set of values, it is one on the set of observable values, since they are all of the form $(p, pu)$ for $p \in \{0,1\}$.
\begin{proof}
For \problog programs it is known, that probabilistic facts are not defined, instead, every assignment of the probabilistic facts can be uniquely extended to a satisfying assignment of the program. 

This implies two things that are useful for us. Firstly, the probability of a \problog program being satisfied is always $1$. Since an \dtp program is a \problog program conditioned on any complete assignment to the decision variables, this also implies that the expected utility of the conditioned program is of the form $(1, u)$. This means that all the values of the inner sum are of the form $(1, u)$ or $(0,0)$.

Secondly, since probabilistic facts cannot be defined, any defined variable must have weights of the form $(1, u)$ as well, which also implies that even if we take out the defined variables from the inner sum, the value of the inner sum is still of the form $(1,u)$ or $(0,0)$.

Overall, it follows that the monoid generated by the observable values is as submonoid of
\[
    \mathcal{M} = (\{(1, u) \mid u \in \mathbb{R}\} \cup (0,0), \otimes, (1,1)).
\]
Now, the only thing left to prove is that $h$ is a monoid homomorphism from $\mathcal{M}$ to $(\mathbb{R}\cup\{-\infty\}, +, 0)$. This can again be easily as follows.

Let $(p, pu), (p', p'u') \in \mathcal{M}$. If $p$ or $p'$ are zero, we have
\[
    h((p, pu)\otimes (p', p'u')) = h((0,0)) = -\infty = h((p, pu))h((p', p'u')).
\]
Otherwise, we know that both are one. Then
\[
    h((1, u)\otimes (1, u')) = h((1,u+u')) = u + u' = h((1, u)) + h((1, u')).
\]
\end{proof}

\end{document}